\documentclass[journal]{IEEEtran}

\usepackage{amsmath,amssymb}
\newtheorem{proposition}{Proposition}
\usepackage{graphicx}
\usepackage{booktabs}
\usepackage{multirow}
\usepackage{array}
\usepackage{cite}
\usepackage{xcolor}
\usepackage{url}
\usepackage{graphicx}

\begin{document}
\bstctlcite{paper_bib_control}

\title{Gaussian Material Fields for Volumetric Multi-Energy CT Decomposition}

\author{Jian Lin, Jiancheng Fang, Hongming Shan, \IEEEmembership{Senior Member, IEEE},
Shaoyu Wang, Yang Chen, \IEEEmembership{Senior Member, IEEE},
and Qiegen Liu, \IEEEmembership{Senior Member, IEEE}
\thanks{This work was supported by the National Natural Science Foundation of China (U24A20304), and in part by the National Key Research and Development Program of China (Grant 2023YFF1204300 and Grant 2023YFF1204302). (Jian Lin and Jiancheng Fang are co-first authors.) (Corresponding author: Qiegen Liu.)}
\thanks{J. Lin, J. Fang, S. Wang, and Q. Liu are with the School of Information Engineering, Nanchang University, Nanchang 330031, China (e-mail: liuqiegen@ncu.edu.cn).}
\thanks{H. Shan is with the Institute of Science and Technology for Brain-inspired Intelligence, Fudan University, Shanghai 200433, China (e-mail: hmshan@fudan.edu.cn).}
\thanks{Y. Chen is with the Laboratory of Image Science and Technology, the School of Computer Science and Engineering, and the Key Laboratory of New Generation Artificial Intelligence Technology and Its Interdisciplinary Applications, Ministry of Education, Southeast University, Nanjing 210096, China (e-mail: chenyang.list@seu.edu.cn).}
}


\maketitle

\begin{abstract}
Volumetric material decomposition in multi-energy computed tomography requires a representation that organizes multiple three-dimensional material fields in a common spatial domain while retaining differences in composition and local structure. We observe that spatial primitives can be shared across materials without tying their coefficients, but their local capacity must respond to material-specific reconstruction needs. We introduce Gaussian material fields, which represent multiple material distributions with shared anisotropic 3D Gaussian primitives and independent nonnegative material coefficients. The shared geometry defines a continuous spatial basis, while the coefficients determine each primitive's contribution to the individual material fields. To reconstruct this representation from multi-energy projections, a differentiable spectral forward model combines Gaussian material path integrals with a calibrated basis matrix, enabling joint optimization of spatial geometry and material composition. Material-aware adaptive density control retains material-specific refinement evidence before aggregation and adjusts local representation capacity to accommodate both spatially extensive components and sparse details. Experiments use synthesized multi-energy projections generated from pseudo-reference material maps constructed by conventional methods from publicly available CT data. Across 15 cases, our approach improves average PSNR by 4.03~dB and SSIM by 4.96\% over the strongest baseline, while reducing NRMSE by 33.45\%. Material-wise comparisons and component ablations support improved recovery of localized structures, while runtime and memory measurements show favorable computational scaling. These results establish Gaussian material fields as an explicit, adaptive representation for volumetric multi-material reconstruction.
\end{abstract}

\begin{IEEEkeywords}
Multi-energy computed tomography, material decomposition, Gaussian material fields, shared geometry, adaptive density control.
\end{IEEEkeywords}

\begin{figure}[t]
\centering
\includegraphics[width=\columnwidth]{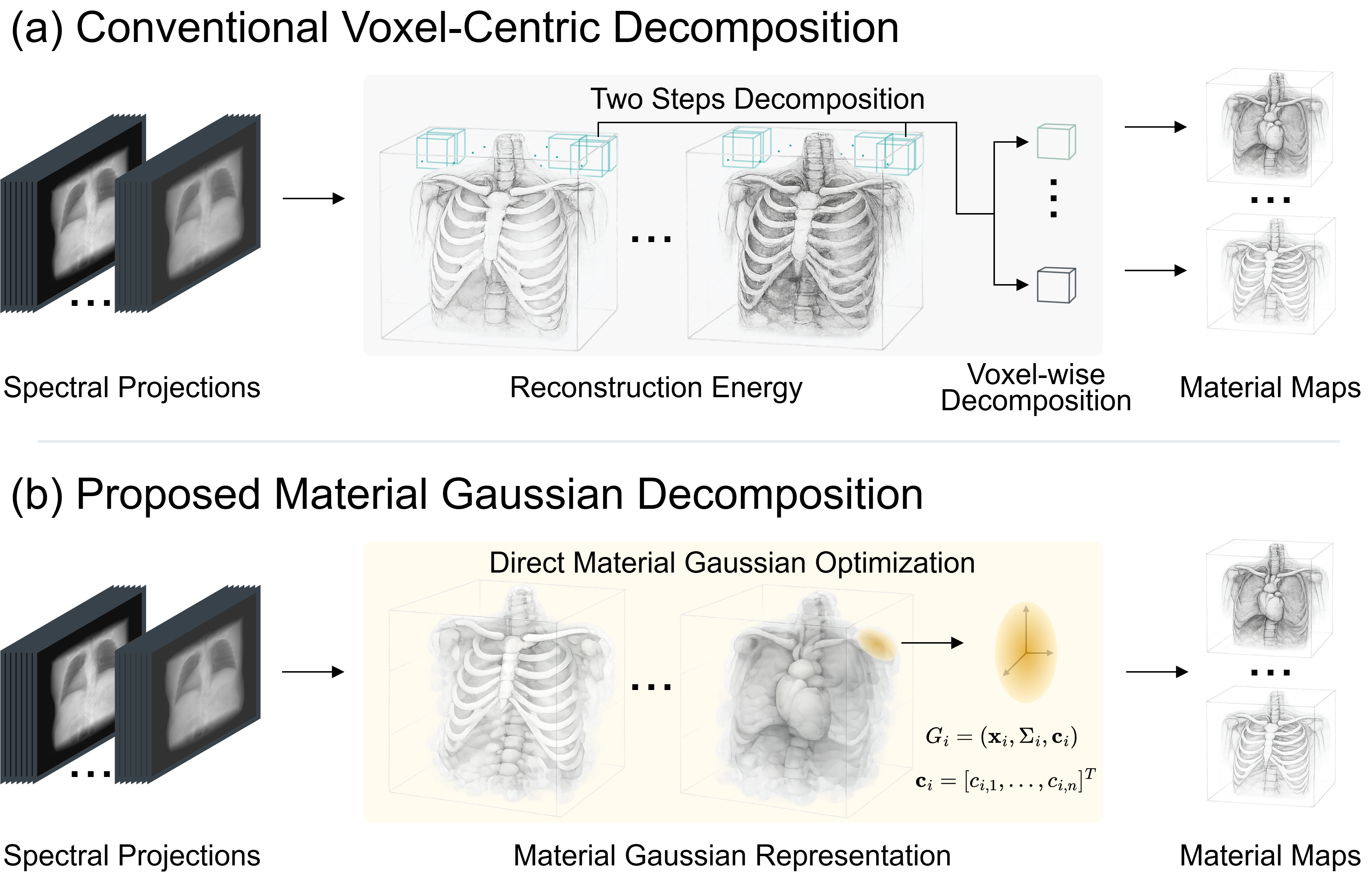}
\caption{Two organizations of volumetric material reconstruction. (a) A two-stage image-domain pipeline reconstructs energy-dependent attenuation volumes before voxel-wise material decomposition. (b) Gaussian material fields represent material composition on shared spatial primitives and optimize geometry and material-specific coefficients from multi-energy projections.}
\label{fig:comparison}
\end{figure}

\section{Introduction}
\IEEEPARstart{M}{ulti-energy} computed tomography (CT), including dual-energy and photon-counting CT, uses energy-dependent X-ray attenuation to recover basis-material distributions beyond conventional attenuation imaging \cite{alvarez1976,mccollough2015,willemink2018,mccollough2020,so2021}. In volumetric material decomposition, the unknown is a collection of three-dimensional material coefficient fields. Recovering these fields requires more than reconstructing several scalar volumes: the representation must organize material composition within a common spatial domain while preserving the distinct distributions and local structures of the individual components.

Existing reconstruction routes organize these unknowns in different ways. Image-domain pipelines first estimate energy-dependent attenuation volumes and then decompose their voxel values. Direct model-based methods instead optimize material volumes through a coupled spectral forward model \cite{long2014,mechlem2018,barber2016,tivnan2020}. Dense voxel grids provide an explicit spatial layout, but assign material variables on a fixed lattice. Neural material fields replace that lattice with a continuous, coordinate-based function \cite{hotta2025,shi2024}, coupling representation and optimization through repeated network queries. Gaussian representations now support both attenuation reconstruction~\cite{cai2024xgs,zha2024r2,li2025_3dgrct,wu2025dgr} and joint spectral basis estimation~\cite{vos2026spectral}. This motivates the question of how a shared spatial basis should be adapted when refinement demands differ across materials.

Our central observation is that spatial organization can be shared while material composition remains independently parameterized. A common set of primitives can define where and over what spatial extent the fields are represented, while separate coefficients determine each primitive's contribution to each material. Shared geometry therefore need not impose equal material values or identical material boundaries. This organization also creates an adaptation requirement: a region that is adequately represented for one material may still lack capacity for another. In particular, a localized component may require refinement even when other materials provide little evidence at the same location. A shared representation should retain this material-specific demand when deciding where to add or remove capacity.

Anisotropic Gaussian primitives provide a concrete way to realize this organization. Their centers, orientations, and scales are explicit spatial parameters, and their overlapping contributions form continuous fields. Material-specific coefficients can weight the same kernels differently, while local primitive edits can change the spatial basis during reconstruction. Differentiable Gaussian X-ray rendering provides the connection to projection measurements \cite{zha2024r2}. Together, these properties make Gaussian primitives suitable not only for compact storage, but also for jointly fitting spatial structure and adapting the basis to heterogeneous material distributions.

We introduce \emph{Gaussian material fields}, which combine shared anisotropic Gaussian geometry with independent nonnegative material coefficients. As illustrated in Fig.~\ref{fig:comparison}, the material fields themselves are the reconstruction variables. A differentiable spectral forward model projects their shared spatial basis, forms material path integrals, and mixes them through a calibrated basis matrix. Projection errors then update geometry and material composition jointly. This coupling also avoids relying on a final image-domain decomposition of separately reconstructed energy fields, where attenuation errors can be amplified. To adapt the shared basis, material-aware adaptive density control retains material-specific evidence before aggregation and reallocates local capacity through cloning, splitting, and pruning. Representation, joint fitting, and capacity adjustment thus address complementary parts of the same multi-material reconstruction problem.

The main contributions are summarized as follows:
\begin{itemize}
    \item We introduce Gaussian material fields, an explicit representation that organizes multiple continuous material distributions through shared spatial primitives and independent material-specific coefficients. A differentiable spectral forward model makes this representation jointly optimizable from multi-energy projections.
    \item We develop material-aware adaptive density control for the shared representation. Material-specific reconstruction evidence guides cloning, splitting, and pruning so that local capacity can respond to both spatially extensive components and sparse material structures.
    \item We evaluate the representation and its reconstruction mechanisms using synthesized multi-energy data derived from public CT data and conventionally constructed pseudo-reference material maps. Overall comparisons, material-wise analysis, component ablations, and configuration studies demonstrate reconstruction quality, detail recovery, and favorable computational characteristics.
\end{itemize}

\section{Related Work}
\subsection{Spectral CT Material Decomposition and Reconstruction}
Spectral CT relates energy-dependent attenuation measurements to basis-material composition~\cite{alvarez1976,mccollough2015,mccollough2020,so2021}. Image-domain decomposition estimates material coefficients from reconstructed energy images. It inherits reconstruction errors despite statistical weighting and cross-channel regularization~\cite{niu2014,xue2017}. Projection-domain decomposition estimates material path integrals before reconstruction, whereas joint reconstruction directly estimates material volumes. Representative methods include E-ART~\cite{zhao2015eart}, IFBP~\cite{li2019ifbp}, SOMA~\cite{pan2023soma}, statistical formulations~\cite{long2014,mechlem2018}, constrained and preconditioned inversion~\cite{barber2016,tivnan2020}, and total-variation regularization~\cite{rudin1992,sidky2008}. Recent networks address image-domain classification and concentration estimation~\cite{rajagopal2025} or learn dual-energy decomposition through polychromatic projection consistency without reference material images~\cite{hellwege2026}.

Continuous neural representations replace fixed voxel grids with coordinate-dependent fields. NeRF introduced this paradigm~\cite{mildenhall2020}, and NeAT, NAF, and IntraTomo adapted neural or self-supervised representations to CT~\cite{rueckert2022,zha2022naf,zang2021}. NeMCoF and neural base-material fields support per-instance material reconstruction without voxel-wise reference supervision~\cite{hotta2025,shi2024}. JSolver jointly estimates material volume fractions and the X-ray spectrum from single-energy projections~\cite{wu2025jsolver}. Diffusion approaches instead introduce learned priors through spectral posterior sampling and memory-efficient volumetric extensions~\cite{jiang2024,jiang2025}. In contrast, Gaussian material fields retain continuous material queries through explicit shared primitives, avoiding dense trainable grids, coordinate-MLP evaluation, and diffusion sampling.

\subsection{Gaussian-Based CT Reconstruction}
3D Gaussian Splatting demonstrated differentiable rendering with explicit anisotropic primitives~\cite{kerbl2023}. Radiative Gaussian Splatting (X-GS) adapted this representation to X-ray novel-view synthesis~\cite{cai2024xgs}, and R$^2$-Gaussian developed a rectified X-ray rasterizer for tomographic reconstruction~\cite{zha2024r2}. Extensions include Gaussian volumetric reconstruction in 3DGR-CT, discretized reconstruction in DGR, coronary reconstruction, and direction-dependent radiograph synthesis~\cite{li2025_3dgrct,wu2025dgr,fu2024_3dgrcar,gao2024ddgs}. Recent work covers faster rasterization and voxelization in FaCT-GS~\cite{pieta2026fact}, learned initialization and projection-domain refinement in SPARK~\cite{zhou2026spark}, and mask-guided initialization with loss scaling in LINGO~\cite{xing2026lingo}.

SpectralCTGaussians fits photoelectric, Compton, and optional K-edge coefficients on shared Gaussians using a polychromatic model, then derives material labels by clustering~\cite{vos2026spectral}. Our method uses a calibrated log-domain model to reconstruct material coefficient fields. Shared geometry organizes multiple continuous material fields, while independent coefficients preserve their distinct compositions. Material-aware density control retains material-branch refinement evidence before aggregation to adapt the shared spatial basis.

\section{Method}
\label{sec:method}

\begin{figure}[t]
\centering
\includegraphics[width=\columnwidth]{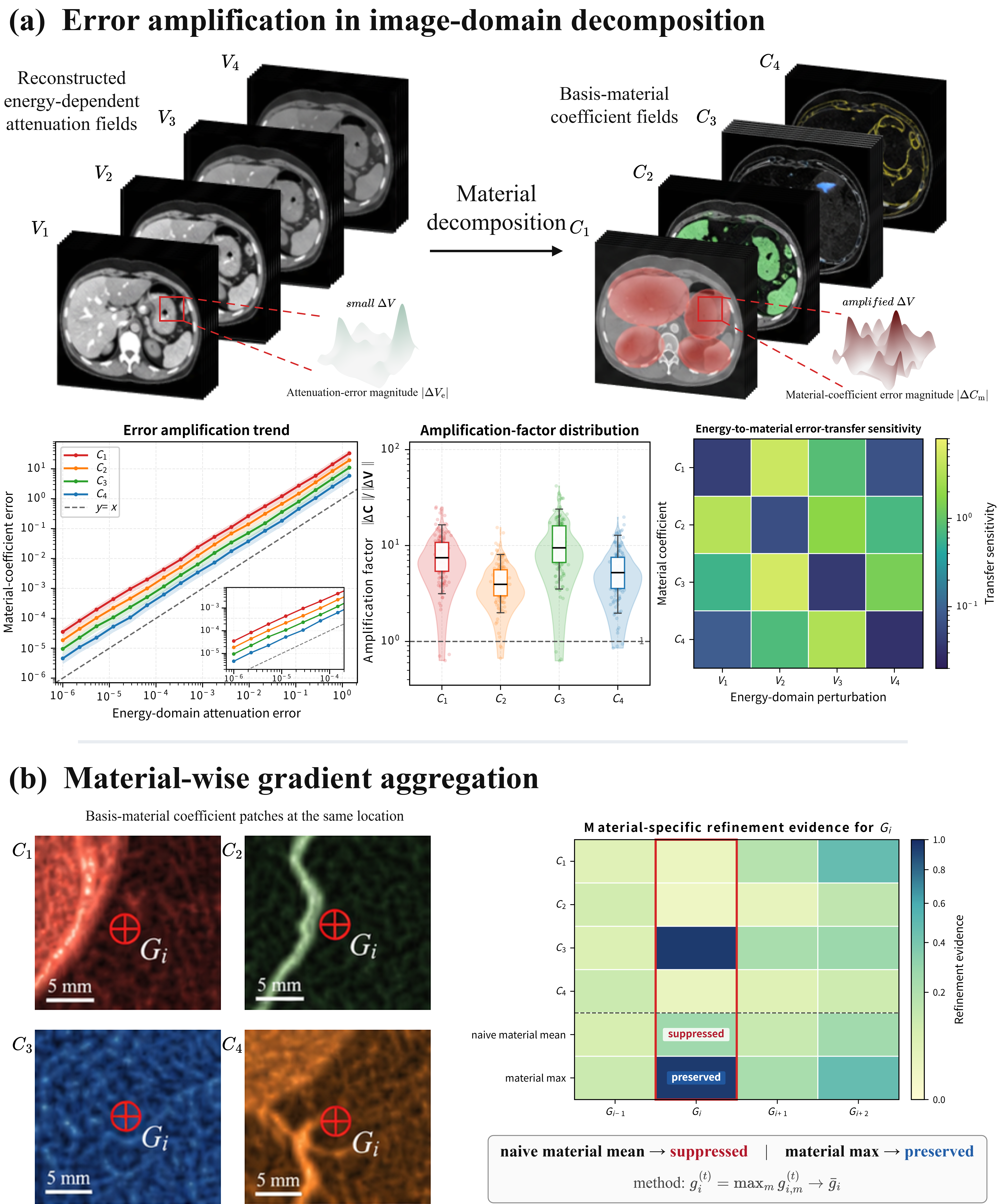}
\caption{(a) Reconstructed energy-dependent attenuation fields $V_e$, corresponding basis-material coefficient fields $C_m$, local error-magnitude visualizations, and quantitative summaries of error transfer across energies and materials. (b) Co-located basis-material coefficient patches and material-specific refinement evidence for a representative Gaussian $G_i$, including naive material averaging and material-wise maximum aggregation.}
\label{fig:motivation}
\end{figure}

\subsection{Motivation}
\label{sec:motivation}

Gaussian material fields separate shared spatial organization from material-specific composition. The method first defines this representation, then fits its geometry and coefficients through a spectral forward model, and finally adjusts its local capacity using material-specific evidence. Fig.~\ref{fig:framework} shows the representation and joint fitting loop. Fig.~\ref{fig:motivation} illustrates the error-transfer and local-refinement considerations addressed by the two reconstruction mechanisms.

\begin{figure*}[t]
\centering
\includegraphics[width=\textwidth]{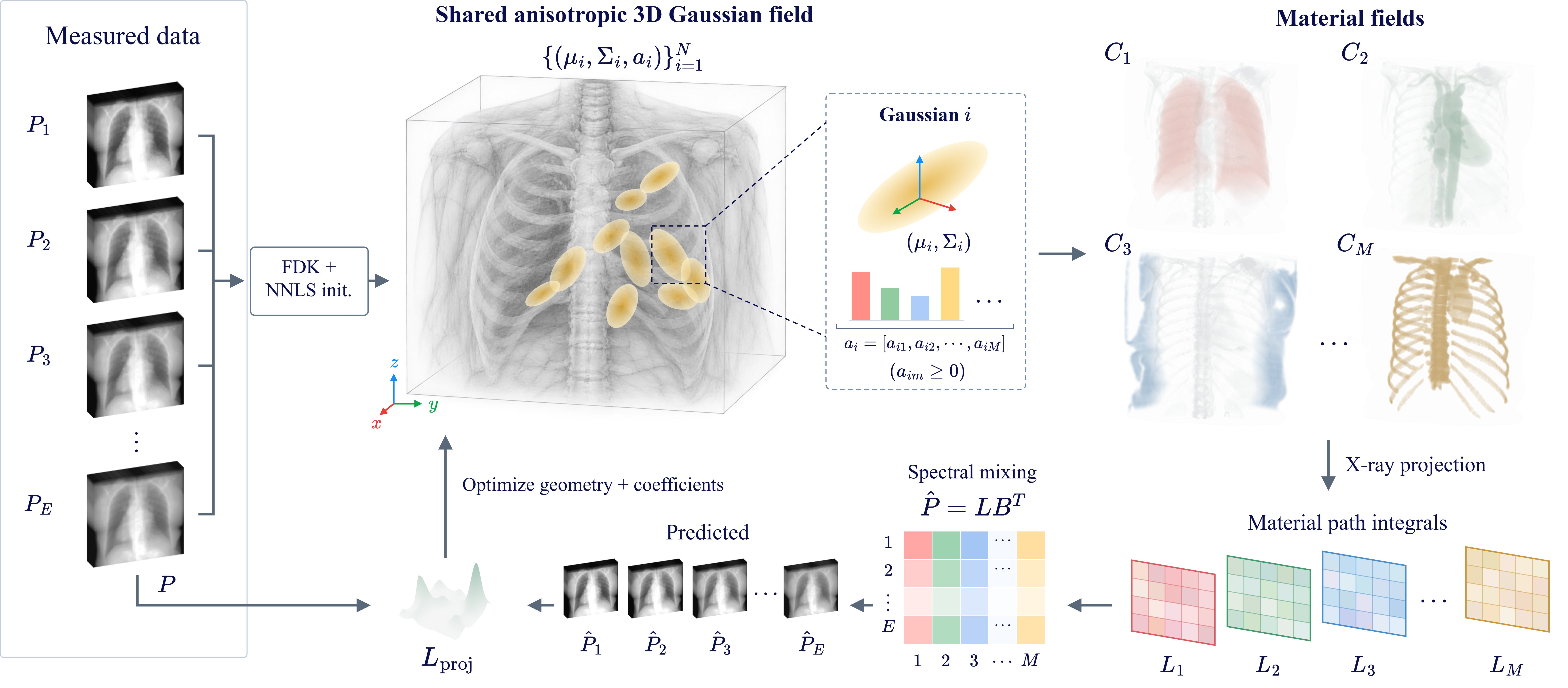}
\caption{Gaussian material fields and their projection-domain optimization. Shared anisotropic primitives carry material-specific coefficients. X-ray projection forms material path integrals, and calibrated spectral mixing produces predicted energy projections. The projection loss updates geometry and coefficients jointly, while the optimized field provides the material outputs.}
\label{fig:framework}
\end{figure*}

\subsection{Shared-Geometry Gaussian Material Field}
\label{sec:method_field}
\label{sec:method_problem}

The representation must place all material fields in a common spatial basis without tying their values. We first specify the measurement model, then define the shared Gaussian kernels and the material-specific coefficients that together form the reconstruction variables.

We reconstruct the nonnegative material field $\mathbf c(\mathbf x)=[C_1(\mathbf x),\ldots,C_M(\mathbf x)]^{\mathsf T}$ directly from $E$ co-registered energy channels, with material indices $\mathcal M=\{1,\ldots,M\}$. Let $\mathbf B\in\mathbb R_{\geq0}^{E\times M}$ be the fixed calibrated basis matrix and $\mathbf b_e^{\mathsf T}$ its $e$th row. For ray $\mathbf x_r(t)=\mathbf o_r+t\mathbf d_r$ with $\|\mathbf d_r\|_2=1$, the effective linear basis model gives
\begin{equation}
\begin{aligned}
 \mathbf v(\mathbf x)&=\mathbf B\mathbf c(\mathbf x),\\
 p_e(r)&=\mathbf b_e^{\mathsf T}\!\int_{t_r^-}^{t_r^+}\mathbf c(\mathbf x_r(t))\,\mathrm dt+\varepsilon_e(r).
\end{aligned}
\label{eq:gmf_measurement_model}
\end{equation}
Here $\mathbf v=[V_1,\ldots,V_E]^{\mathsf T}$ and $\varepsilon_e$ includes noise and model mismatch. This calibrated log-domain model~\cite{alvarez1976,long2014} is not an exact polychromatic photon-count model. Energy-dependent volumes are derived quantities rather than independent reconstruction variables.

All materials share $N$ anisotropic Gaussian kernels, but each kernel carries its own material vector $\mathbf a_i=[a_{i,1},\ldots,a_{i,M}]^{\mathsf T}$. With center $\boldsymbol\mu_i$, positive scales $\boldsymbol\sigma_i$, and rotation $\mathbf R_i$ obtained from a normalized nonzero quaternion $\mathbf q_i$, define
\begin{equation}
\begin{aligned}
 K_i(\mathbf x)&=\exp\!\left[-\tfrac12(\mathbf x-\boldsymbol\mu_i)^{\mathsf T}
 \boldsymbol\Sigma_i^{-1}(\mathbf x-\boldsymbol\mu_i)\right],\\
 \boldsymbol\Sigma_i&=\mathbf R_i\operatorname{diag}(\boldsymbol\sigma_i^2)\mathbf R_i^{\mathsf T}\succ0,\\
 a_{i,m}&=\operatorname{softplus}(\beta_{i,m}).
\end{aligned}
\label{eq:gmf_parameterization}
\end{equation}
We collect the geometric parameters in $\Theta=\{\boldsymbol\mu_i,\mathbf q_i,\boldsymbol\eta_i\}_{i=1}^N$, where $\boldsymbol\eta_i$ parameterizes the Gaussian scales. The material coefficients form the matrix $\mathbf A=[\mathbf a_1,\ldots,\mathbf a_N]^{\mathsf T}=\operatorname{softplus}(\mathbf Z)\in\mathbb R_{>0}^{N\times M}$, where $\mathbf Z=[\beta_{i,m}]$. The material and attenuation fields are then given by
\begin{equation}
\begin{aligned}
 \mathbf c_{\Theta,\mathbf A}(\mathbf x)
 &=\sum_{i=1}^N K_i(\mathbf x)\mathbf a_i
 =\mathbf A^{\mathsf T}\mathbf k(\mathbf x;\Theta),\\
 \mathbf v_{\Theta,\mathbf A}(\mathbf x)&=\mathbf B\mathbf A^{\mathsf T}\mathbf k(\mathbf x;\Theta),
\end{aligned}
\label{eq:gmf_vector_field}
\end{equation}
where $\mathbf k=[K_1,\ldots,K_N]^{\mathsf T}$. Shared geometry establishes spatial correspondence without imposing equal material values or a sum-to-one composition. The kernels are unnormalized: $a_{i,m}$ is a peak amplitude, not integrated material content. Softplus allows arbitrarily small, but not exactly zero, amplitudes at finite logits. Parameter maps and their numerical boundary conventions are discussed in the Appendix.

In Eq.~\eqref{eq:gmf_vector_field}, the kernels determine the shared spatial organization, while the coefficient vectors determine the individual material distributions. Each material field is a differently weighted combination of the same kernels, rather than a copy of a common scalar field. Moving or reshaping a primitive changes the basis available to all materials, whereas changing a material-specific coefficient changes only that primitive's contribution to the corresponding field. This separation supplies the geometric and compositional variables for joint projection-domain fitting.

\subsection{Projection-Domain Joint Reconstruction}
\label{sec:method_optimization}
\label{sec:method_projection}

To fit the material representation, the forward model must connect its shared spatial basis and material-specific coefficients to the energy-resolved measurements. Fig.~\ref{fig:motivation}(a) illustrates why this coupling is useful: errors in reconstructed attenuation fields can be amplified during subsequent image-domain decomposition, with different transfer sensitivities across materials. We instead evaluate projection consistency from the material fields themselves, using the Gaussian ray response and its rasterized implementation below.

Substitution of Eq.~\eqref{eq:gmf_vector_field} into Eq.~\eqref{eq:gmf_measurement_model} reduces each material path integral to a weighted sum of Gaussian ray responses. For $\mathbf Q_i=\boldsymbol\Sigma_i^{-1}$, $\boldsymbol\delta_{i,r}=\mathbf o_r-\boldsymbol\mu_i$, and $\alpha_{i,r}=\mathbf d_r^{\mathsf T}\mathbf Q_i\mathbf d_r$, completing the square yields
\begin{equation}
\begin{aligned}
 \ell_{i,r}^{\infty}
 &=\sqrt{\frac{2\pi}{\alpha_{i,r}}}\exp(-\Delta_{i,r}/2),\\
 \Delta_{i,r}
 &=\boldsymbol\delta_{i,r}^{\mathsf T}\!\left(
 \mathbf Q_i-\frac{\mathbf Q_i\mathbf d_r\mathbf d_r^{\mathsf T}\mathbf Q_i}{\alpha_{i,r}}
 \right)\boldsymbol\delta_{i,r}.
\end{aligned}
\label{eq:gmf_line_response}
\end{equation}
This is the exact infinite-line response. The finite-segment response $\ell_{i,r}^{\mathrm{seg}}=\int_{t_r^-}^{t_r^+}K_i(\mathbf x_r(t))\,\mathrm dt$ and an explicit omitted-tail bound are established in Proposition~\ref{prop:gmf_ray}. Replacing it by Eq.~\eqref{eq:gmf_line_response} requires negligible tails beyond the acquisition segment.

To exploit rasterization, we use the local affine ray-space approximation of R$^2$-Gaussian~\cite{zha2024r2}. Let $\mathbf W_v\in\mathbb R^{3\times3}$ be the rotational part of the viewing transform and $\mathbf J_{i,v}\in\mathbb R^{3\times3}$ the ray-space Jacobian at primitive $i$. With projected center $\widehat{\boldsymbol\mu}_{i,v}$, detector covariance $\widehat{\boldsymbol\Sigma}_{i,v}$, and $\boldsymbol\rho_{i,r}=\mathbf u_r-\widehat{\boldsymbol\mu}_{i,v}$ for ray $r=(v,\mathbf u_r)$,
\begin{equation}
\begin{aligned}
 \widetilde{\boldsymbol\Sigma}_{i,v}
 &=\mathbf J_{i,v}\mathbf W_v\boldsymbol\Sigma_i\mathbf W_v^{\mathsf T}\mathbf J_{i,v}^{\mathsf T},\\
 \Phi_{r,i}&=\zeta_{i,v}\exp\!\left[-\tfrac12\boldsymbol\rho_{i,r}^{\mathsf T}
 \widehat{\boldsymbol\Sigma}_{i,v}^{-1}\boldsymbol\rho_{i,r}\right],\\
 \zeta_{i,v}&=\left(\frac{2\pi|\widetilde{\boldsymbol\Sigma}_{i,v}|}
 {|\widehat{\boldsymbol\Sigma}_{i,v}|}\right)^{1/2}.
\end{aligned}
\label{eq:gmf_projected_gaussian}
\end{equation}
The determinant ratio follows from Gaussian marginalization (Proposition~\ref{prop:gmf_rectifier}), rather than an empirical amplitude correction. The third ray-space coordinate measures physical length. The matrix $\widehat{\boldsymbol\Sigma}_{i,v}$ is the leading $2\times2$ covariance block. Indexing all detector samples by $r=1,\ldots,R$, the shared additive projector gives
\begin{equation}
\begin{aligned}
 \widehat{\mathbf L}&=\boldsymbol\Phi(\Theta)\mathbf A,\\
 \widehat{\mathbf P}&=\boldsymbol\Phi(\Theta)\mathbf A\mathbf B^{\mathsf T},
 \qquad \boldsymbol\Phi\in\mathbb R^{R\times N}.
\end{aligned}
\label{eq:gmf_master_forward}
\end{equation}
Equation~\eqref{eq:gmf_master_forward} is exact for the defined rasterized model, not for the original finite-ray integral. Spectral mixing remains linear, whereas spatial approximation includes local linearization, omitted tails, and numerical footprint truncation. The factorization assumes common geometric footprints and additive material rendering. The matrix denotes an operator applied view by view, not a dense stored array.

\begin{figure*}[t]
\centering
\includegraphics[width=\textwidth]{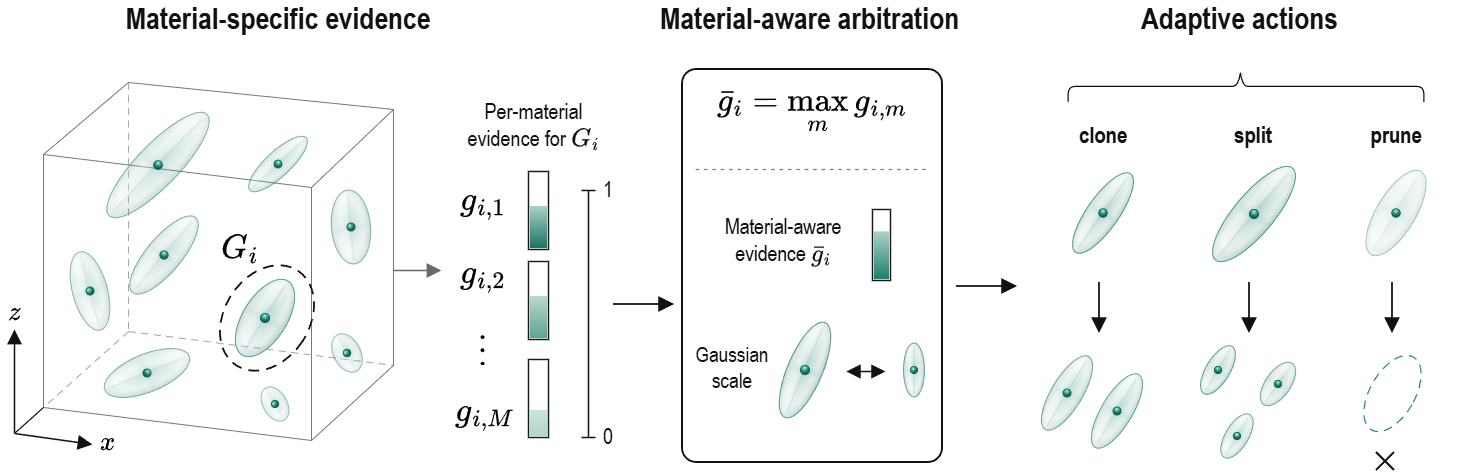}
\caption{Material-aware adaptive density control. Material-specific projection sensitivities are retained before aggregation and combined with Gaussian scale to guide cloning and splitting. Pruning tests material amplitudes across all channels.}
\label{fig:material_density_control}
\end{figure*}

Equation~\eqref{eq:gmf_master_forward} assigns a distinct role to each part of the representation. The shared projector describes how spatial kernels intersect the acquisition rays. The coefficient matrix turns those responses into material path integrals. The calibrated basis matrix mixes the material contributions into energy channels. Geometry and composition therefore affect the same predicted measurements, allowing a common projection objective to update both rather than fitting separate energy-dependent geometries.

For fixed energy weights $\mathbf W_E=\operatorname{diag}(w_1,\ldots,w_E)$, $w_e>0$, the inverse problem at a fixed primitive count is
\begin{equation}
\begin{aligned}
 &\underset{\Theta,\mathbf Z}{\operatorname{minimize}}\quad
 \mathcal L_{\mathrm{proj}}(\Theta,\mathbf Z)
 =\frac1R\|\mathbf E_p\mathbf W_E\|_{1,1},\\
 &\mathbf E_p=\boldsymbol\Phi(\Theta)\operatorname{softplus}(\mathbf Z)
 \mathbf B^{\mathsf T}-\mathbf P,
\end{aligned}
\label{eq:gmf_projection_loss}
\end{equation}
where $\|\mathbf X\|_{1,1}$ sums absolute entries. No volumetric ground truth or auxiliary volume/support loss is used. Define the projection sensitivity $S_{r,e}=(w_e/R)\operatorname{sgn}(E_{p,r,e})$, with $\operatorname{sgn}(0)=0$, and its material-domain counterpart $\mathbf D=\mathbf S\mathbf B$. On a fixed active set and away from zero residuals, the first variation is
\begin{equation}
\begin{aligned}
 \mathrm d\mathcal L_{\mathrm{proj}}
 &=\langle\mathbf D,(\mathrm d\boldsymbol\Phi)\mathbf A
      +\boldsymbol\Phi\,\mathrm d\mathbf A\rangle_F\\
 &=\langle\mathbf D\mathbf A^{\mathsf T},\mathrm d\boldsymbol\Phi\rangle_F
   +\langle\boldsymbol\Phi^{\mathsf T}\mathbf D,\mathrm d\mathbf A\rangle_F.
\end{aligned}
\label{eq:gmf_first_variation}
\end{equation}
Thus, for $\boldsymbol\theta_i\in\{\boldsymbol\mu_i,\mathbf q_i,\boldsymbol\eta_i\}$,
\begin{equation}
\begin{aligned}
 \nabla_{\mathbf A}\mathcal L_{\mathrm{proj}}&=\boldsymbol\Phi^{\mathsf T}\mathbf D,\\
 \nabla_{\mathbf Z}\mathcal L_{\mathrm{proj}}
 &=(\boldsymbol\Phi^{\mathsf T}\mathbf D)\odot\operatorname{sigmoid}(\mathbf Z),\\
 \nabla_{\boldsymbol\theta_i}\mathcal L_{\mathrm{proj}}
 &=\sum_{r=1}^R(\mathbf D\mathbf A^{\mathsf T})_{r,i}
 \nabla_{\boldsymbol\theta_i}\Phi_{r,i}.
\end{aligned}
\label{eq:gmf_gradients}
\end{equation}
At zero residuals these expressions use the selected absolute-value subgradient. Since $D_{r,m}=\sum_e S_{r,e}B_{e,m}$, material coefficients receive spectrally specific updates while geometry receives coupled evidence from every material. Full geometric differentials, including the position dependence of $\mathbf J_{i,v}$, are given in Proposition~\ref{prop:gmf_differentials}. Fixed-geometry separation and conditioning are analyzed in Proposition~\ref{prop:gmf_conditioning}. This analysis neither asserts global uniqueness nor replaces the actual $\ell_1$ objective by least squares.

We initialize from energy-wise FDK followed by voxel-wise NNLS~\cite{feldkamp1984,lawson1995}, then optimize geometry and composition jointly with Adam using separate parameter groups and exponential learning-rate schedules. Initialization details are deferred to the Appendix. Parameter updates are interleaved with the density control below. At the final iterate, material values are queried directly as $\widehat{\mathbf c}(\mathbf x)=\mathbf A_T^{\mathsf T}\mathbf k(\mathbf x;\Theta_T)$. No final independent decomposition is required.

\subsection{Material-Aware Adaptive Density Control}
\label{sec:method_densification}

Joint parameter updates fit the current spatial basis, but cannot by themselves change how many primitives represent a local structure. Capacity adjustment must also respect material differences: a region may be adequately represented in one field yet unresolved in another. Fig.~\ref{fig:motivation}(b) illustrates how averaging across materials can suppress a localized refinement response. We retain the material branches when constructing refinement evidence, then use their aggregation and primitive scale to guide the shared representation's adaptation, as illustrated in Fig.~\ref{fig:material_density_control}.

Equation~\eqref{eq:gmf_gradients} updates parameters within the current representation, whereas density control adjusts its local capacity. To retain material identity, let $\mathbf u_{i,m}^{(v)}$ denote the projected-center variable in material branch $m$, evaluated at the common center $\widehat{\boldsymbol\mu}_{i,v}$. Holding projected covariance and amplitude correction fixed for this partial derivative, Eqs.~\eqref{eq:gmf_projected_gaussian} and \eqref{eq:gmf_first_variation} give
\begin{equation}
\begin{aligned}
 \mathbf h_{i,m}^{(v)}
 &:=\nabla_{\mathbf u_{i,m}^{(v)}}\mathcal L_{\mathrm{proj}}\\
 &=a_{i,m}\sum_{r\in\mathcal R_v}D_{r,m}\Phi_{r,i}
 \widehat{\boldsymbol\Sigma}_{i,v}^{-1}\boldsymbol\rho_{i,r},
\end{aligned}
\label{eq:gmf_material_evidence}
\end{equation}
where $\mathcal R_v$ indexes the view's detector samples. This projected-center contribution is not the full three-dimensional position gradient. Summing the vectors across materials before forming a refinement score can cancel responses. Averaging their norms can dilute a response confined to one material. Writing $\mathbf h_{i,m}^{(t)}$ for the branch derivative at iteration $t$, we use
\begin{equation}
\begin{aligned}
 g_i^{(t)}&=\max_{m\in\mathcal M}\|\mathbf h_{i,m}^{(t)}\|_2,\\
 \bar g_i&=\frac{\sum_{t\in\mathcal W}\nu_i^{(t)}g_i^{(t)}}
 {\sum_{t\in\mathcal W}\nu_i^{(t)}+\epsilon},
\end{aligned}
\label{eq:gmf_refinement_score}
\end{equation}
with zero evidence for invisible branches, visibility indicator $\nu_i^{(t)}$, accumulation window $\mathcal W$, and $\epsilon>0$. The maximum is taken before temporal accumulation. Proposition~\ref{prop:gmf_evidence} states precisely what this aggregation preserves.

For $s_i=\|\boldsymbol\sigma_i\|_\infty$ and thresholds $\tau_g,\tau_s$, the pre-pruning refinement rule is
\begin{equation}
 \mathcal D_i=\begin{cases}
 \mathrm{clone},&\bar g_i\geq\tau_g,\ s_i\leq\tau_s,\\
 \mathrm{split},&\bar g_i\geq\tau_g,\ s_i>\tau_s,\\
 \mathrm{keep},&\bar g_i<\tau_g.
 \end{cases}
\label{eq:gmf_refine_rule}
\end{equation}
These edits remain connected to the same forward model. Replacing primitive $i$ by children $\mathcal C_i$ changes the predicted data by
\begin{equation}
 \delta\widehat{\mathbf P}_i
 =\sum_{j\in\mathcal C_i}\boldsymbol\phi_{i,j}(\mathbf B\mathbf a_{i,j})^{\mathsf T}
 -\boldsymbol\phi_i(\mathbf B\mathbf a_i)^{\mathsf T},
\label{eq:gmf_edit_response}
\end{equation}
where $\boldsymbol\phi_i=\boldsymbol\Phi_{:,i}$. Cloning creates two identical kernels with activated coefficients $\mathbf a_i/2$, so Eq.~\eqref{eq:gmf_edit_response} vanishes at insertion under unchanged linear rendering. Splitting samples center offsets using the parent covariance, reduces scales, and divides coefficients across children. It preserves the coefficient sum, but generally changes both the field integral and its projections. Exact rules and their consequences are established in Proposition~\ref{prop:gmf_edits}.

A low-content primitive is pruned only when $\max_m a_{i,m}<\tau_a$. The geometric guard also removes centers outside the valid reconstruction box $\Omega$. Thus, a non-negligible response in any material prevents low-content pruning. Refinement is restricted by the prescribed iteration window and Gaussian budget. Optional size tests apply only when enabled. These residual-guided edits change the approximation space between parameter updates, without constituting an exact minimization over $N$ or guaranteeing a loss decrease. Together, Eqs.~\eqref{eq:gmf_projection_loss} and \eqref{eq:gmf_edit_response} describe fitting and adapting one continuous material representation.

\section{Experiments}
\label{sec:experiments}

\subsection{Datasets and Experimental Settings}
\label{sec:exp_setup}

We report material reconstruction on 15 simulated multi-energy cone-beam cases. Of these, 9 anatomical soft-tissue/calcium phantoms are constructed from AAPM Low Dose CT Grand Challenge CT volumes~\cite{mccollough2017lowdose} and have known synthetic material maps. The remaining 6 cases use material maps derived from lizard-head data~\cite{warr2021lizarddata,warr2021hyperspectral}, powder-phantom data~\cite{warr2022powderdata,warr2021hyperspectral}, mouse-limb data~\cite{warr2022mousedata,warr2022mouse}, MUSIC 3D~\cite{kehl2018music}, walnut data~\cite{zhou2025walnutpcct}, and ore data~\cite{warr2020ore}. We refer to these derived maps as pseudo-reference material maps. The material maps generate the multi-energy projections and serve as the evaluation references. Table~\ref{tab:datasets} summarizes the channel counts and reconstruction grids.

\begin{table}[!t]
\caption{Simulated cases. $E/M$ gives energy/material channels.}
\label{tab:datasets}
\centering
\footnotesize
\setlength{\tabcolsep}{2pt}
\begin{tabular*}{\columnwidth}{@{\extracolsep{\fill}}p{1.72in}ccc@{}}
\toprule
Case & $E/M$ & Detector & Volume \\
\midrule
AAPM & $2/2$ & $512^2$ & $256^3$ \\
Lizard head, mouse forelimb & $3/3$ & $128^2$ & $80^3$ \\
Powder phantom & $4/4$ & $128^2$ & $80^3$ \\
MUSIC 3D & $6/6$ & $256^2$ & $256^3$ \\
Walnut (cubic) & $2/2$ & $256^2$ & $256^3$ \\
Ore (cubic) & $2/2$ & $128^2$ & $80^3$ \\
\bottomrule
\end{tabular*}
\end{table}

For every case, 50 uniformly spaced training views cover a full rotation and 100 nonoverlapping views form the test set. The AAPM training projections include the prescribed Poisson and Gaussian noise, whereas their test projections and the projections in the other 6 cases are clean. The cone-beam simulator uses source-to-detector and source-to-origin distances of 7 and 5, respectively, and a $4\times4$ detector extent in simulation coordinates. Detector sampling and volume resolution are listed in Table~\ref{tab:datasets}.

The comparison includes FDK+NNLS~\cite{feldkamp1984,lawson1995}, ASD-POCS+NNLS~\cite{sidky2008}, NeRF+NNLS~\cite{mildenhall2020}, R2+NNLS~\cite{zha2024r2}, ADJUST~\cite{zeegers2022adjust}, CIL SIRT+NNLS~\cite{jorgensen2021cil}, and the proposed method. The 5 NNLS baselines reconstruct energy volumes before material decomposition. The dataset's material basis matrix and channel order are fixed for each case and used by all methods that require the basis. The compared runs use 6 ASD-POCS iterations, 1000 NeRF epochs, 15,000 R2 iterations per energy channel, 1000 outer and 10 inner ADJUST iterations, 50 CIL SIRT iterations, and 15,000 iterations for the proposed method. FDK is a direct reconstruction. These are method-specific stopping settings rather than a common iteration budget. The source code is available at \url{https://github.com/yqx7150/MGS}.

We evaluate every method on the same reference material channels for each case. For each channel, PSNR uses the range of its reference volume, NRMSE is voxelwise RMSE divided by that range, and SSIM averages nonempty reference slices along the 3 volume axes. We report the arithmetic mean of each metric across a case's material channels. Family-level summaries average 1 score per case configuration.

\subsection{Comparative Results}
\label{sec:exp_comparison}

Table~\ref{tab:main_results} presents all 15 simulated-projection cases at 50 training views. Each cell gives material-averaged full-volume PSNR, SSIM, and NRMSE, and the final column averages the 15 cases equally. The proposed method leads all 3 metrics in each case.

\begin{table*}[!t]
\caption{50-view material-volume PSNR (dB)/SSIM/NRMSE on 15 cases.}
\label{tab:main_results}
\centering
\fontsize{8}{9}\selectfont
\setlength{\tabcolsep}{1pt}
\renewcommand{\arraystretch}{1.06}
\begin{tabular*}{\textwidth}{@{\extracolsep{\fill}}lccccc@{}}
\toprule
Method & L096 & L109 & L143 & L192 & L286 \\ 
\midrule
FDK+NNLS & 23.58/0.3316/0.0705 & 18.33/0.2113/0.1279 & 17.10/0.1789/0.1489 & 19.90/0.2393/0.1065 & 18.01/0.1943/0.1329 \\ 
ADJUST & 31.15/0.8710/0.0370 & 26.68/0.8034/0.0586 & 24.15/0.6455/0.0850 & 27.20/0.7694/0.0528 & 25.20/0.6877/0.0692 \\ 
NeRF+NNLS & 31.10/0.7799/0.0294 & 27.66/0.6679/0.0434 & 28.29/0.6608/0.0409 & 29.28/0.7530/0.0359 & 28.56/0.6764/0.0393 \\ 
CIL SIRT+NNLS & 31.35/0.8079/0.0314 & 28.10/0.6754/0.0428 & 27.31/0.5913/0.0495 & 27.88/0.7211/0.0428 & 27.52/0.6415/0.0458 \\ 
R2+NNLS & \underline{34.02}/0.8610/\underline{0.0216} & \underline{31.32}/0.7987/\underline{0.0289} & 29.95/0.7538/\underline{0.0351} & \underline{31.76}/0.8331/\underline{0.0274} & \underline{31.29}/0.7959/\underline{0.0296} \\ 
ASD-POCS+NNLS & 33.60/\underline{0.9011}/0.0243 & 30.75/\underline{0.8340}/0.0319 & \underline{30.17}/\underline{0.8384}/0.0375 & 30.25/\underline{0.8698}/0.0328 & 30.19/\underline{0.8551}/0.0342 \\ 
Ours & \textbf{37.01}/\textbf{0.9340}/\textbf{0.0169} & \textbf{33.55}/\textbf{0.8939}/\textbf{0.0243} & \textbf{32.82}/\textbf{0.8797}/\textbf{0.0286} & \textbf{33.68}/\textbf{0.9109}/\textbf{0.0232} & \textbf{33.62}/\textbf{0.8923}/\textbf{0.0246} \\ 
\bottomrule
\end{tabular*}
\par\vspace{5pt}
\begin{tabular*}{\textwidth}{@{\extracolsep{\fill}}lccccc@{}}
\toprule
Method & L291 & L310 & L333 & L506 & Lizard \\ 
\midrule
FDK+NNLS & 24.48/0.3551/0.0632 & 16.83/0.1630/0.1522 & 19.19/0.2180/0.1155 & 18.69/0.2163/0.1221 & 27.68/0.6972/0.0449 \\ 
ADJUST & 31.71/0.8971/0.0322 & 24.12/0.6371/0.0835 & 28.16/0.8303/0.0486 & 25.13/0.7236/0.0652 & 25.59/0.7971/0.0620 \\ 
NeRF+NNLS & 31.78/0.8238/0.0270 & 28.25/0.6357/0.0408 & 28.78/0.7207/0.0380 & 27.73/0.7169/0.0428 & 27.77/0.7940/0.0452 \\ 
CIL SIRT+NNLS & 30.90/0.8302/0.0313 & 27.33/0.5912/0.0473 & 27.74/0.7035/0.0438 & 27.10/0.6988/0.0462 & 29.82/0.8683/0.0325 \\ 
R2+NNLS & \underline{34.28}/0.8828/\underline{0.0205} & \underline{30.90}/0.7681/\underline{0.0312} & \underline{31.79}/0.8303/\underline{0.0275} & \underline{30.92}/0.8199/\underline{0.0300} & 29.91/0.8719/0.0357 \\ 
ASD-POCS+NNLS & 33.02/\underline{0.9056}/0.0244 & 30.38/\underline{0.8613}/0.0346 & 30.17/\underline{0.8681}/0.0334 & 29.57/\underline{0.8607}/0.0349 & \underline{31.47}/\underline{0.9321}/\underline{0.0268} \\ 
Ours & \textbf{36.65}/\textbf{0.9431}/\textbf{0.0165} & \textbf{33.65}/\textbf{0.8841}/\textbf{0.0255} & \textbf{33.85}/\textbf{0.9084}/\textbf{0.0232} & \textbf{33.06}/\textbf{0.9053}/\textbf{0.0248} & \textbf{33.92}/\textbf{0.9581}/\textbf{0.0203} \\ 
\bottomrule
\end{tabular*}
\par\vspace{5pt}
\begin{tabular*}{\textwidth}{@{\extracolsep{\fill}}lcccccc@{}}
\toprule
Method & Mouse FL & Powder & MUSIC & Walnut & Ore & Average \\ 
\midrule
FDK+NNLS & 30.05/0.7778/0.0326 & 26.55/0.6505/0.0746 & 42.96/0.8691/0.0078 & 26.68/0.4796/0.0470 & 27.68/0.7866/0.0413 & 23.85/0.4246/0.0859 \\ 
ADJUST & 23.00/0.7398/0.0723 & 28.10/0.8178/0.0572 & 46.67/0.9890/0.0051 & 21.94/0.7501/0.0810 & 24.22/0.7264/0.0657 & 27.54/0.7790/0.0584 \\ 
NeRF+NNLS & 22.33/0.7117/0.0879 & 27.44/0.7122/0.0622 & 44.17/0.9719/0.0071 & 22.00/0.7011/0.0797 & 32.44/0.8814/0.0239 & 29.17/0.7471/0.0429 \\ 
CIL SIRT+NNLS & 31.83/0.9324/0.0266 & 29.50/0.7772/0.0469 & 43.52/0.9832/0.0072 & 27.14/0.8110/0.0444 & 31.81/0.8885/0.0257 & 29.92/0.7681/0.0376 \\ 
R2+NNLS & 24.88/0.7760/0.0661 & 29.19/0.7261/0.0523 & \underline{48.86}/0.9901/\underline{0.0040} & 22.20/0.7209/0.0781 & \underline{33.34}/\underline{0.9071}/\underline{0.0216} & 31.64/0.8224/0.0340 \\ 
ASD-POCS+NNLS & \underline{33.01}/\underline{0.9481}/\underline{0.0230} & \underline{32.98}/\underline{0.8826}/\underline{0.0322} & 45.58/\underline{0.9904}/0.0056 & \underline{29.86}/\underline{0.8756}/\underline{0.0324} & 31.24/0.8751/0.0274 & \underline{32.15}/\underline{0.8865}/\underline{0.0290} \\ 
Ours & \textbf{35.58}/\textbf{0.9711}/\textbf{0.0174} & \textbf{41.39}/\textbf{0.9579}/\textbf{0.0108} & \textbf{51.18}/\textbf{0.9964}/\textbf{0.0029} & \textbf{37.77}/\textbf{0.9712}/\textbf{0.0133} & \textbf{35.01}/\textbf{0.9515}/\textbf{0.0178} & \textbf{36.18}/\textbf{0.9305}/\textbf{0.0193} \\ 
\bottomrule
\end{tabular*}
\end{table*}

On the 9 AAPM phantoms, the proposed method's mean PSNR margin over the strongest competing method in each case is 2.38~dB. The 2 animal cases also favor the proposed method. The largest gaps occur for the powder phantom and walnut, where the proposed method reaches 41.39 and 37.77~dB, versus 32.98 and 29.86~dB for the strongest alternatives. These results show improved recovery of the constructed and pseudo-reference material volumes across the tested configurations.

In Fig.~\ref{fig:qualitative_anatomy}, the proposed method preserves the thin calcification arc and cortical boundary more continuously than competing methods. It also retains the forelimb contour and internal structure with fewer spurious responses.

\begin{figure*}[!t]
\centering
\includegraphics[width=\textwidth]{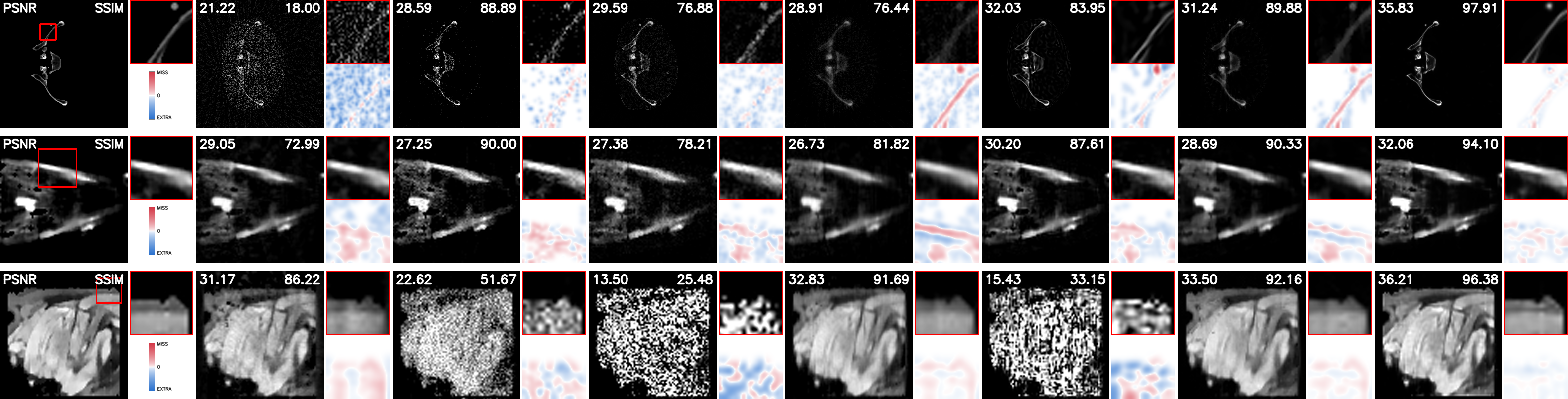}
\caption{Matched 50-view slices. Rows, top to bottom: AAPM L506 calcification, lizard-head bone, mouse-forelimb background. Columns, left to right: GT, FDK, ADJUST, NeRF, CIL, R2, ASD-POCS, Ours. Main/ROI views are at left/upper right. Auxiliary structural differences are at lower right (red: missing, blue: extra). Window $[0,1]$. Full-slice 2D PSNR (dB)/SSIM (\%).}
\label{fig:qualitative_anatomy}
\end{figure*}

\begin{figure*}[!t]
\centering
\includegraphics[width=\textwidth]{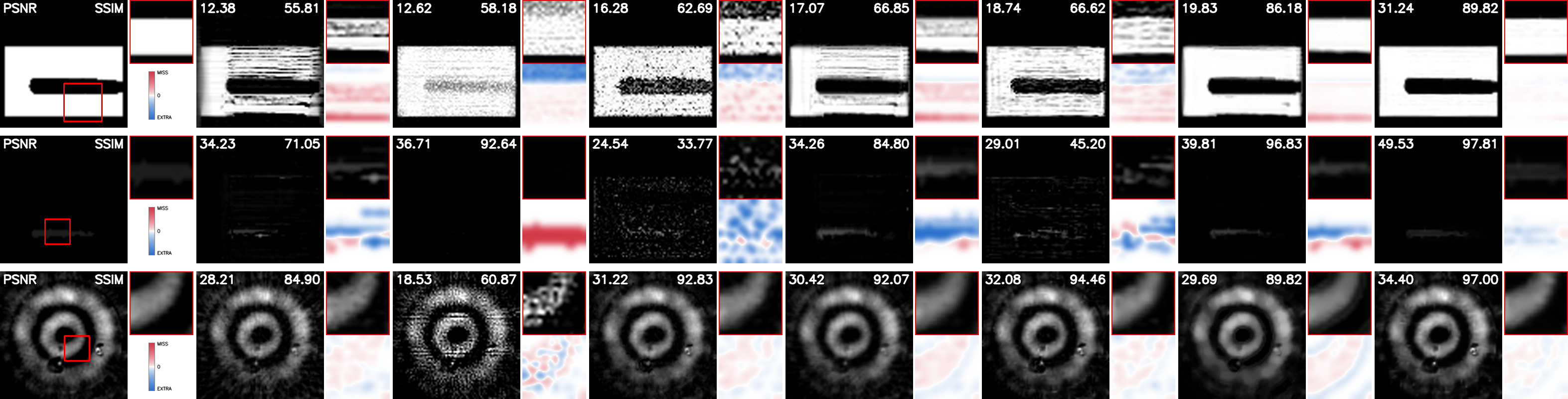}
\caption{50-view material-slice comparisons. Rows show powder-phantom Al, powder-phantom Fe, and ore Pb, respectively. Enlarged regions highlight narrow powder features and the inner ore-Pb ring.}
\label{fig:qualitative_industrial}
\end{figure*}

\begin{figure*}[!t]
\centering
\includegraphics[width=\textwidth]{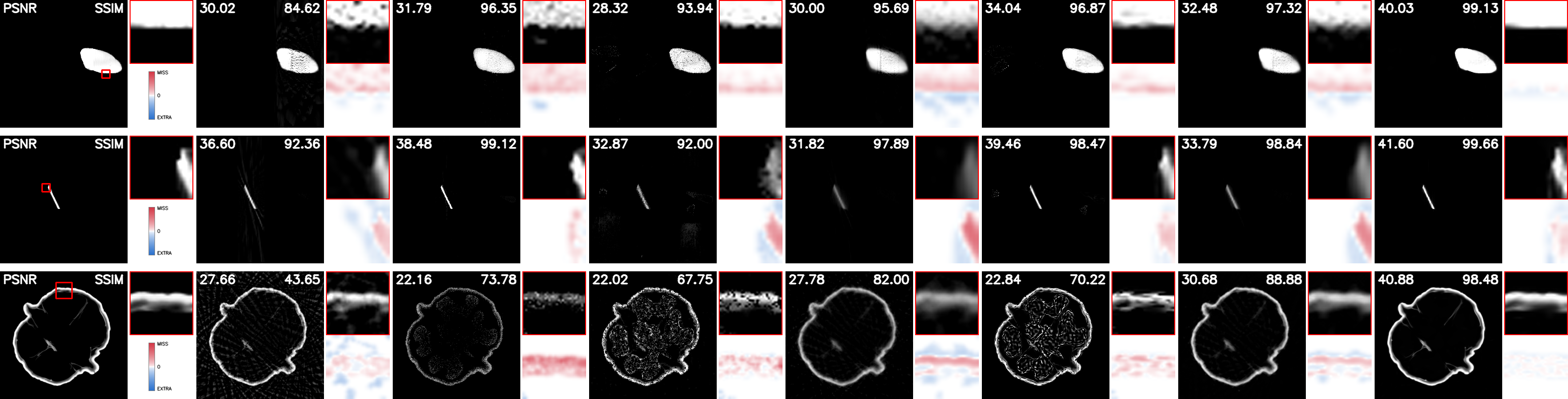}
\caption{50-view material-slice comparisons. Rows show MUSIC 3D water-like, MUSIC 3D iodine-like, and walnut shell, respectively. Enlarged regions highlight compact material boundaries and the thin walnut-shell rim.}
\label{fig:qualitative_diversity}
\end{figure*}

In Fig.~\ref{fig:qualitative_industrial}, the proposed method better preserves narrow powder features against a smoother background and recovers a more continuous inner ore-Pb ring.

In Fig.~\ref{fig:qualitative_diversity}, the proposed method retains compact MUSIC boundaries with less mottling and a more continuous walnut rim, while several baselines introduce signal inside the shell.

\begin{figure}[!t]
\centering
\includegraphics[width=\columnwidth]{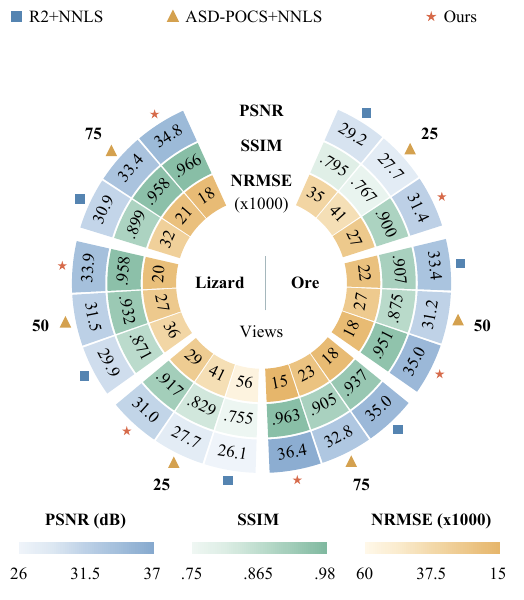}
\caption{Material-volume quality at 25/50/75 views: lizard head (left) and ore (right). Rings show PSNR, SSIM, and NRMSE $\times10^3$ from outside in. Symbols mark methods. Darker shading is better within each metric. Each setting uses 1 seed.}
\label{fig:sparse_views}
\end{figure}

\begin{table}[!ht]
\caption{Material-volume mean $\pm$ SD across 3 seeds. NRMSE $\times10^{-3}$. Bold/underline mark best/second means.}
\label{tab:ablation_results}
\centering
\fontsize{8}{9}\selectfont
\setlength{\tabcolsep}{1.1pt}
\renewcommand{\arraystretch}{1.02}
\begin{tabular*}{\columnwidth}{@{\extracolsep{\fill}}lccc@{}}
\toprule
Setting & PSNR (dB) & SSIM & NRMSE ($10^{-3}$) \\
\midrule
\multicolumn{4}{@{}l}{\emph{Lizard head}} \\
Full & 33.91$\pm$0.03 & 0.95813$\pm$0.00021 & 20.368$\pm$0.071 \\
Fixed set & 33.33$\pm$0.01 & 0.95470$\pm$0.00021 & 21.778$\pm$0.027 \\
Mean agg. & \underline{34.06}$\pm$0.04 & \underline{0.96107}$\pm$0.00015 & \underline{20.045}$\pm$0.083 \\
Material $\ell_1$ & \textbf{34.30}$\pm$0.01 & \textbf{0.96396}$\pm$0.00002 & \textbf{19.538}$\pm$0.027 \\
Energy $\ell_1$ & 29.66$\pm$0.02 & 0.88741$\pm$0.00031 & 34.538$\pm$0.095 \\
\midrule
\multicolumn{4}{@{}l}{\emph{Powder phantom}} \\
Full & 41.55$\pm$0.16 & 0.95820$\pm$0.00024 & 10.728$\pm$0.102 \\
Fixed set & 40.85$\pm$0.03 & \underline{0.96617}$\pm$0.00019 & 11.023$\pm$0.031 \\
Mean agg. & \underline{41.90}$\pm$0.21 & 0.96604$\pm$0.00047 & \underline{10.267}$\pm$0.265 \\
Material $\ell_1$ & \textbf{42.59}$\pm$0.14 & \textbf{0.96730}$\pm$0.00027 & \textbf{9.589}$\pm$0.069 \\
Energy $\ell_1$ & 33.13$\pm$0.08 & 0.80226$\pm$0.00020 & 29.940$\pm$0.287 \\
\midrule
\multicolumn{4}{@{}l}{\emph{MUSIC 3D}} \\
Full & 51.22$\pm$0.06 & 0.99644$\pm$0.00002 & 2.865$\pm$0.022 \\
Fixed set & 50.36$\pm$0.07 & 0.99720$\pm$0.00001 & 3.115$\pm$0.023 \\
Mean agg. & \underline{51.43}$\pm$0.02 & \underline{0.99758}$\pm$0.00001 & \underline{2.784}$\pm$0.012 \\
Material $\ell_1$ & \textbf{51.59}$\pm$0.04 & \textbf{0.99791}$\pm$0.00001 & \textbf{2.731}$\pm$0.009 \\
Energy $\ell_1$ & 51.11$\pm$0.02 & 0.99500$\pm$0.00002 & 2.926$\pm$0.003 \\
\bottomrule
\end{tabular*}
\end{table}

\begin{figure}[!t]
\centering
\includegraphics[width=\columnwidth]{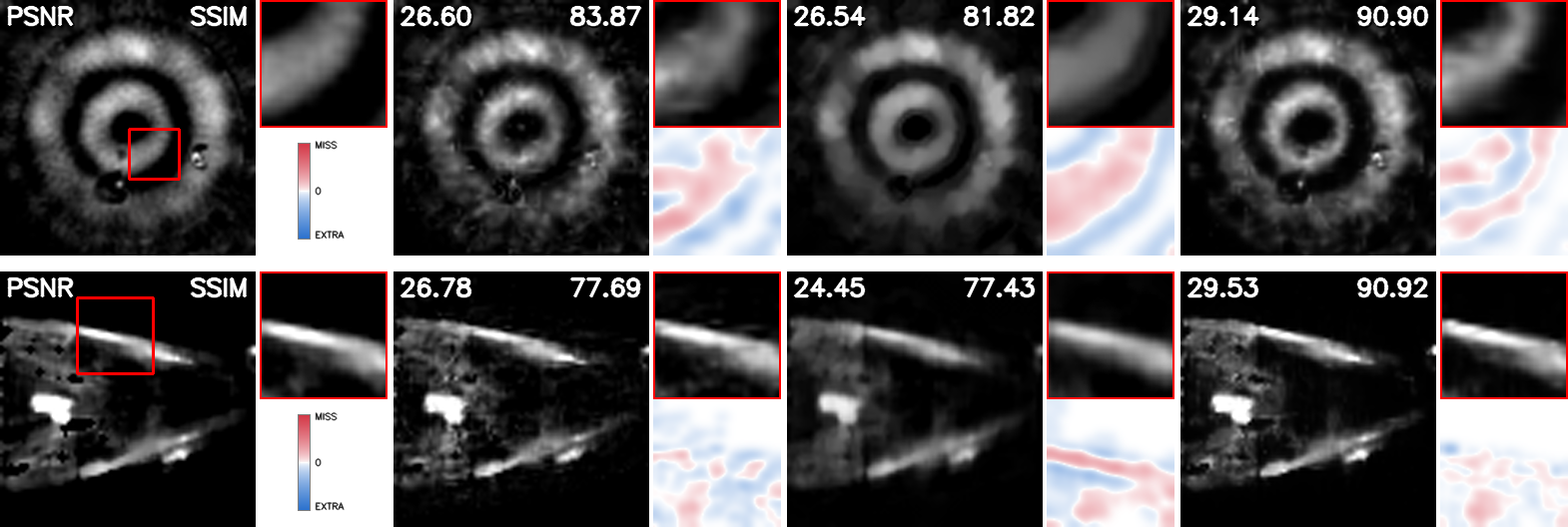}
\caption{Matched 25-view slices: ore Pb and lizard-head bone, top to bottom. Columns: GT, R2, ASD-POCS, Ours.}
\label{fig:sparse_qualitative_v25}
\end{figure}

\begin{figure}[!t]
\centering
\includegraphics[width=\columnwidth]{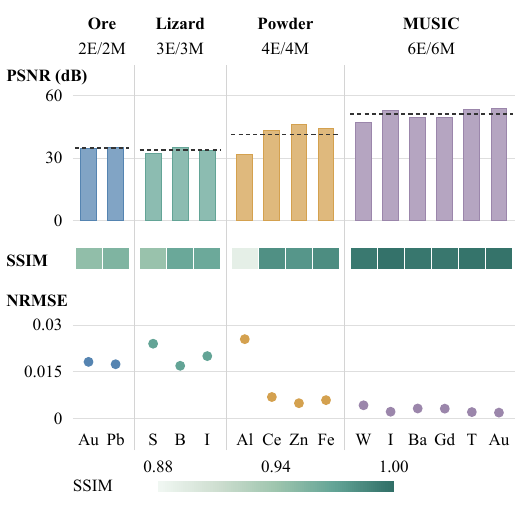}
\caption{Per-material quality in 4 scene-specific energy/material configurations. Bars show PSNR (dB), dashed lines case means, shaded strips SSIM, and dots unscaled NRMSE. S/B denote soft tissue/bone, Ce/Zn denote CeO$_2$/ZnO, and W/T denote water-like/tungsten-like. MUSIC uses a $256^3$ grid, the others $80^3$.}
\label{fig:multienergy_scope}
\end{figure}

We also evaluate full-rotation sparse-view reconstruction on the ore and lizard-head cases using 25, 50, and 75 uniformly spaced training views. Within each case, the material basis, channel order, volume grid, and 100 disjoint test views stay fixed. The proposed method, R2+NNLS, and ASD-POCS+NNLS are evaluated with 1 seed each and run for 15,000 iterations, 15,000 iterations per energy channel, and 6 iterations, respectively. Fig.~\ref{fig:sparse_views} shows the material-averaged full-volume PSNR, SSIM, and NRMSE across view counts. The proposed method leads all 3 metrics at the 6 tested case/view settings.

Fig.~\ref{fig:sparse_qualitative_v25} compares selected 25-view material slices with the same slice, ROI, and grayscale window across methods within each case. The proposed method retains the inner ore-Pb ring and the lizard-head cortical edge more continuously than the 2 shown baselines. Ore Pb uses the same fixed location as its 50-view row in Fig.~\ref{fig:qualitative_industrial}.

Fig.~\ref{fig:multienergy_scope} reports the complete method on 4 distinct simulated-projection cases with 2, 3, 4, and 6 energy/material channels. Each uses 50 training views, 100 disjoint test views, 1 seed, and 15,000 iterations. The material basis is fixed within each case. These observations demonstrate reconstruction for the 4 configurations, but their different scenes and bases, along with the different grid used in 1 case, prevent attributing case differences to channel count alone.

\begin{figure}[!t]
\centering
\includegraphics[width=\columnwidth]{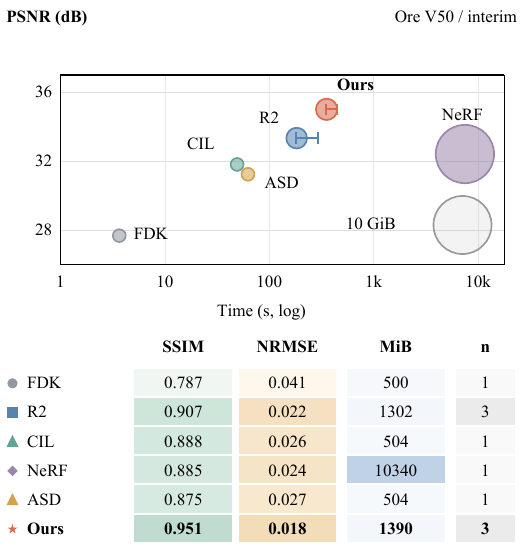}
\caption{Ore quality versus log time at 50 views on RTX 5090 with method-specific budgets. Bubble area encodes maximum sampled process memory across repeats, not CUDA peak. The lower matrix gives SSIM, unscaled NRMSE, MiB, and repeat count. For $n=3$, time is the median and quality is the mean.}
\label{fig:efficiency_ore}
\end{figure}

On the 50-view ore case, Fig.~\ref{fig:efficiency_ore} compares reconstruction quality and observed cost on the same RTX 5090 GPU. Primary time spans worker start to final metric. The proposed method attains 35.01~dB mean PSNR across 3 repeats, versus 33.34~dB for R2+NNLS, while its median primary time is 354.514~s versus 183.081~s. The observed primary-time ranges are 348.035--449.293~s and 180.119--292.759~s, respectively. For these 2 methods, quality is the repeat mean and time the median. The remaining 4 methods have 1 observation each.

\subsection{Ablation Studies}
\label{sec:exp_ablation}

\begin{figure}[!t]
\centering
\includegraphics[width=\columnwidth]{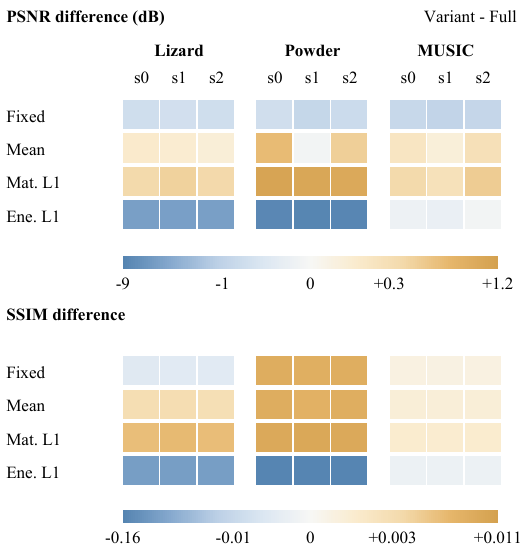}
\caption{Paired variant-minus-Full contrasts across 3 cases and seeds. Upper/lower maps show PSNR (dB)/SSIM differences. Blue is negative, ochre positive. Signed-asinh scales are centered at zero, with parameters 0.3~dB and 0.003 SSIM.}
\label{fig:ablation_size_quality}
\end{figure}

We compare the complete method with 4 controlled variants on lizard head ($E=M=3$), powder phantom ($E=M=4$), and MUSIC 3D ($E=M=6$). The variants hold the Gaussian set fixed, replace maximum with mean gradient aggregation, use a material-domain $\ell_1$ objective, or use an energy-domain $\ell_1$ objective. We use 3 paired seeds per case with the same saved 50,000-point initialization across settings within each case and seed. All settings run for 15,000 iterations. Table~\ref{tab:ablation_results} reports the material-volume metrics. Fig.~\ref{fig:ablation_size_quality} shows each variant's paired PSNR and SSIM differences from the complete setting.

Material-domain $\ell_1$ has the highest 3-seed mean PSNR and SSIM and the lowest mean NRMSE in each case. Mean aggregation also improves all 3 mean metrics relative to the complete setting, although its powder-phantom PSNR is slightly lower for 1 seed. Holding the Gaussian set fixed reduces mean PSNR in all cases, while its mean SSIM exceeds the complete setting for powder and MUSIC 3D. The energy-domain $\ell_1$ variant performs worse on all 3 mean metrics. Final representation sizes differ across settings, so these results do not isolate the quality effect of aggregation from the number of Gaussians.
\section{Conclusion}

We introduced Gaussian material fields for volumetric multi-energy CT decomposition, combining shared anisotropic geometry with independent material coefficients. A differentiable spectral forward model enabled joint fitting from multi-energy projections, while material-aware density control adapted local capacity using material-specific evidence. Experiments using synthesized projections from public-data-derived pseudo-reference material maps showed improved material reconstruction and localized detail recovery, supported by material-wise comparisons and ablations. Runtime and memory measurements demonstrated computational benefits in the tested settings. This representation unified spatial organization, material composition, and local adaptation.

\appendix
\label{app:gmf_derivations}

This appendix provides the derivations and proofs supporting the method in Sec.~\ref{sec:method}, together with parameterization and implementation details.

\begin{proposition}[Finite Gaussian ray response]
\label{prop:gmf_ray}
For a positive definite $\boldsymbol\Sigma_i$, a unit ray direction, and fixed limits $t_r^-<t_r^+$, define $\beta_{i,r}=\mathbf d_r^{\mathsf T}\mathbf Q_i\boldsymbol\delta_{i,r}$ and $\gamma_{i,r}=\boldsymbol\delta_{i,r}^{\mathsf T}\mathbf Q_i\boldsymbol\delta_{i,r}$. The exact finite response is
\begin{equation}
\begin{aligned}
 \ell_{i,r}^{\mathrm{seg}}
 &=\sqrt{\frac{\pi}{2\alpha_{i,r}}}e^{-\Delta_{i,r}/2}
 \left[\operatorname{erf}(z_{i,r}^+/\sqrt2)
 -\operatorname{erf}(z_{i,r}^-/\sqrt2)\right],\\
 z_{i,r}^{\pm}&=\frac{\alpha_{i,r}t_r^{\pm}+\beta_{i,r}}{\sqrt{\alpha_{i,r}}},
 \qquad \Delta_{i,r}=\gamma_{i,r}-\frac{\beta_{i,r}^2}{\alpha_{i,r}}.
\end{aligned}
\label{eq:gmf_segment_integral}
\end{equation}
Its infinite-limit value is Eq.~\eqref{eq:gmf_line_response}.
\end{proposition}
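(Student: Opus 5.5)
We substitute the ray parameterization $\mathbf x_r(t)=\mathbf o_r+t\mathbf d_r$ into the kernel $K_i$ of Eq.~\eqref{eq:gmf_parameterization}. This reduces the segment integral to a one-dimensional Gaussian integral in $t$, which we then evaluate by completing the square. Writing $\mathbf x_r(t)-\boldsymbol\mu_i=\boldsymbol\delta_{i,r}+t\mathbf d_r$ and using the symmetry of $\mathbf Q_i=\boldsymbol\Sigma_i^{-1}$, the quadratic form expands to
\begin{equation*}
(\boldsymbol\delta_{i,r}+t\mathbf d_r)^{\mathsf T}\mathbf Q_i(\boldsymbol\delta_{i,r}+t\mathbf d_r)=\alpha_{i,r}t^2+2\beta_{i,r}t+\gamma_{i,r}.
\end{equation*}
Positive definiteness of $\boldsymbol\Sigma_i$, and hence of $\mathbf Q_i$, together with $\|\mathbf d_r\|_2=1$ gives $\alpha_{i,r}>0$. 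This is the only place the hypotheses are needed: it legitimizes dividing by $\alpha_{i,r}$ and taking its square root.

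Completing the square gives
\begin{equation*}
\alpha_{i,r}t^2+2\beta_{i,r}t+\gamma_{i,r}=\alpha_{i,r}\bigl(t+\beta_{i,r}/\alpha_{i,r}\bigr)^2+\Delta_{i,r},
\end{equation*}
with $\Delta_{i,r}=\gamma_{i,r}-\beta_{i,r}^2/\alpha_{i,r}$ as stated. The factor $e^{-\Delta_{i,r}/2}$ is independent of $t$ and leaves the integral. We then substitute $z=(\alpha_{i,r}t+\beta_{i,r})/\sqrt{\alpha_{i,r}}$, so that $\mathrm dt=\mathrm dz/\sqrt{\alpha_{i,r}}$. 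Because the map is increasing, the limits become exactly $z_{i,r}^{\pm}$. The remaining integral $\int_{z^-}^{z^+}e^{-z^2/2}\,\mathrm dz$ equals $\sqrt{\pi/2}\,[\operatorname{erf}(z^+/\sqrt2)-\operatorname{erf}(z^-/\sqrt2)]$ by the definition $\operatorname{erf}(y)=\tfrac{2}{\sqrt\pi}\int_0^y e^{-s^2}\,\mathrm ds$ and a second rescaling $s=z/\sqrt2$. Collecting the constants gives the prefactor $\sqrt{\pi/(2\alpha_{i,r})}$ in Eq.~\eqref{eq:gmf_segment_integral}.

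For the infinite limit, let $t_r^\pm\to\pm\infty$. Since $\alpha_{i,r}>0$, we have $z_{i,r}^\pm\to\pm\infty$, so the erf bracket tends to $2$ and the response becomes $\sqrt{2\pi/\alpha_{i,r}}\,e^{-\Delta_{i,r}/2}$. It remains to match this with the matrix form of $\Delta_{i,r}$ in Eq.~\eqref{eq:gmf_line_response}. Using $\beta_{i,r}=\mathbf d_r^{\mathsf T}\mathbf Q_i\boldsymbol\delta_{i,r}$ and symmetry of $\mathbf Q_i$, we have $\beta_{i,r}^2=\boldsymbol\delta_{i,r}^{\mathsf T}\mathbf Q_i\mathbf d_r\mathbf d_r^{\mathsf T}\mathbf Q_i\boldsymbol\delta_{i,r}$, which identifies the two expressions.

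No step is deep. The main place to be careful is the bookkeeping in the erf normalization, where the factors of $\sqrt2$ and $\sqrt\pi$ must produce $\sqrt{\pi/2}$ and not $\sqrt{2\pi}$, and in the orientation of the limits under the substitution. Optionally, the same computation yields the omitted-tail bound mentioned in the main text. The difference between the infinite and finite responses is $\sqrt{\pi/(2\alpha_{i,r})}\,e^{-\Delta_{i,r}/2}[\operatorname{erfc}(z^+/\sqrt2)+\operatorname{erfc}(-z^-/\sqrt2)]$. Each term can be bounded by the standard estimate $\operatorname{erfc}(y)\le e^{-y^2}$ for $y\ge0$, which applies whenever $z^-<0<z^+$.
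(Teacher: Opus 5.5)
Your proof is correct and follows the paper's argument: expand the quadratic form along the ray, complete the square, substitute $z=\sqrt{\alpha_{i,r}}\,(t+\beta_{i,r}/\alpha_{i,r})$, and let the erf bracket tend to $2$. Your explicit check that $\gamma_{i,r}-\beta_{i,r}^2/\alpha_{i,r}$ matches the matrix form of $\Delta_{i,r}$ in Eq.~\eqref{eq:gmf_line_response} is a small step the paper leaves implicit, and your optional tail estimate via $\operatorname{erfc}(y)\le e^{-y^2}$ is a valid alternative to the paper's monotonicity bound $F(z^-)+1-F(z^+)\le\operatorname{erfc}(k/\sqrt2)$ on the relative tail.
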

\begin{IEEEproof}
Suppressing subscripts, expand and complete the square:
\begin{equation}
\begin{aligned}
 (\boldsymbol\delta+t\mathbf d)^{\mathsf T}\mathbf Q(\boldsymbol\delta+t\mathbf d)
 &=\alpha t^2+2\beta t+\gamma\\
 &=\alpha(t+\beta/\alpha)^2+\Delta.
\end{aligned}
\label{eq:gmf_complete_square}
\end{equation}
The substitution $z=\sqrt\alpha(t+\beta/\alpha)$ gives Eq.~\eqref{eq:gmf_segment_integral}. The error-function bracket approaches $2$ at infinite limits. If $F$ is the standard normal CDF, the omitted relative tail is
\begin{equation}
\begin{aligned}
 1-\ell^{\mathrm{seg}}/\ell^\infty
 &=F(z^-)+1-F(z^+)\\
 &\leq\operatorname{erfc}(k/\sqrt2),
 \quad z^-\leq-k,\ z^+\geq k.
\end{aligned}
\label{eq:gmf_tail_bound}
\end{equation}
The bound holds for $k>0$ by CDF monotonicity. It separates segment truncation from the later affine projection approximation.
\end{IEEEproof}

\begin{proposition}[Covariance-dependent rectification]
\label{prop:gmf_rectifier}
For the positive definite locally transformed covariance, write
\begin{equation}
 \widetilde{\boldsymbol\Sigma}
 =\begin{bmatrix}\mathbf S&\mathbf b\\\mathbf b^{\mathsf T}&c\end{bmatrix},\qquad
 \xi=c-\mathbf b^{\mathsf T}\mathbf S^{-1}\mathbf b>0.
\label{eq:gmf_schur_covariance}
\end{equation}
Integrating the corresponding unit-peak Gaussian over the third coordinate gives the rectification factor $\sqrt{2\pi\xi}=(2\pi|\widetilde{\boldsymbol\Sigma}|/|\mathbf S|)^{1/2}$ in Eq.~\eqref{eq:gmf_projected_gaussian}.
\end{proposition}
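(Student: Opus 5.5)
The plan is to reduce the claim to the one-dimensional Gaussian integral already used in Proposition~\ref{prop:gmf_ray}, after splitting the inverse covariance by blocks. Write $\mathbf y=(\mathbf w,s)$ with $\mathbf w\in\mathbb R^2$ the detector coordinates and $s$ the third ray-space coordinate. Since the third coordinate measures physical length, no Jacobian factor enters the $s$-integration. The unit-peak Gaussian is $\exp[-\tfrac12\mathbf y^{\mathsf T}\widetilde{\boldsymbol\Sigma}^{-1}\mathbf y]$, centered at the projected mean.

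First, I will invoke the block-inverse formula for the partition in Eq.~\eqref{eq:gmf_schur_covariance}. Its $(2,2)$ entry is $1/\xi$. Its off-diagonal block is $-\mathbf S^{-1}\mathbf b/\xi$. Its $(1,1)$ block is $\mathbf S^{-1}+\mathbf S^{-1}\mathbf b\mathbf b^{\mathsf T}\mathbf S^{-1}/\xi$.

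Positivity of $\xi$ follows from $\widetilde{\boldsymbol\Sigma}\succ0$. Indeed, $\mathbf S\succ0$ as a principal submatrix, and $\xi$ is the Schur complement. Equivalently, the determinant identity $|\widetilde{\boldsymbol\Sigma}|=|\mathbf S|\,\xi$ holds, which is the standard Schur determinant formula. This identity also gives the equality $\sqrt{2\pi\xi}=(2\pi|\widetilde{\boldsymbol\Sigma}|/|\mathbf S|)^{1/2}$ directly.

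Next, I expand the quadratic form as
\[
\mathbf y^{\mathsf T}\widetilde{\boldsymbol\Sigma}^{-1}\mathbf y=\frac{(s-\mathbf b^{\mathsf T}\mathbf S^{-1}\mathbf w)^2}{\xi}+\mathbf w^{\mathsf T}\mathbf S^{-1}\mathbf w .
\]
I will verify this by expanding the right-hand side and matching it with the three blocks above. This completing-the-square step mirrors Eq.~\eqref{eq:gmf_complete_square}, with $\alpha=1/\xi$.

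Integrating over $s\in\mathbb R$ then gives $\sqrt{2\pi\xi}\,\exp[-\tfrac12\mathbf w^{\mathsf T}\mathbf S^{-1}\mathbf w]$. This is a 2D Gaussian with covariance $\mathbf S=\widehat{\boldsymbol\Sigma}$, the leading block, times the amplitude $\zeta$ of Eq.~\eqref{eq:gmf_projected_gaussian}. It therefore identifies the rectification factor as a marginalization constant rather than an empirical correction.

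The main obstacle is bookkeeping rather than depth. I must keep the shifted mean consistent, since the center translates $\mathbf w$ to $\boldsymbol\rho$. I must also state that the integral is over the full line, consistent with the affine approximation and with the tail discussion of Proposition~\ref{prop:gmf_ray}.
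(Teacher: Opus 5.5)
Your proposal is correct and follows the paper's proof essentially step for step: both use the block-inverse (Schur complement) expansion of the quadratic form into $\mathbf h^{\mathsf T}\mathbf S^{-1}\mathbf h+(z-\mathbf b^{\mathsf T}\mathbf S^{-1}\mathbf h)^2/\xi$, the determinant identity $|\widetilde{\boldsymbol\Sigma}|=|\mathbf S|\xi$, and the one-dimensional Gaussian integral that yields $\sqrt{2\pi\xi}$. Your additional justification of $\xi>0$ from positive definiteness, and your remark that the full-line integral relies on the affine approximation, match the paper's closing caveat.
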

\begin{IEEEproof}
For displacement $[\mathbf h^{\mathsf T},z]^{\mathsf T}$, the block inverse identity yields
\begin{equation}
\begin{aligned}
 \begin{bmatrix}\mathbf h\\z\end{bmatrix}^{\mathsf T}
 \widetilde{\boldsymbol\Sigma}^{-1}
 \begin{bmatrix}\mathbf h\\z\end{bmatrix}
 &=\mathbf h^{\mathsf T}\mathbf S^{-1}\mathbf h
 +\frac{(z-\mathbf b^{\mathsf T}\mathbf S^{-1}\mathbf h)^2}{\xi},\\
 |\widetilde{\boldsymbol\Sigma}|&=|\mathbf S|\xi.
\end{aligned}
\label{eq:gmf_schur_integral}
\end{equation}
The integral of the second Gaussian factor over $z\in\mathbb R$ is $\sqrt{2\pi\xi}$. The first factor is independent of $z$. This proves the stated amplitude and detector-plane exponent. The identity is exact for the affine Gaussian. Its use for perspective rays retains the local approximation in the ray-space transform~\cite{zha2024r2}.
\end{IEEEproof}

\begin{proposition}[Conditional separation and conditioning]
\label{prop:gmf_conditioning}
For fixed geometry and column-major vectorization,
\begin{equation}
\begin{aligned}
 \operatorname{vec}(\widehat{\mathbf P})
 &=(\mathbf B\otimes\boldsymbol\Phi)\operatorname{vec}(\mathbf A),\\
 \operatorname{rank}(\mathbf B\otimes\boldsymbol\Phi)
 &=\operatorname{rank}(\mathbf B)\operatorname{rank}(\boldsymbol\Phi).
\end{aligned}
\label{eq:gmf_identifiability}
\end{equation}
The unrestricted coefficient map is injective precisely when both factors have full column rank, requiring $E\geq M$ and $R\geq N$. For the diagnostic quadratic objective $\mathcal Q(\mathbf A)=\frac12\|\boldsymbol\Phi\mathbf A\mathbf B^{\mathsf T}-\mathbf P\|_F^2$, its coefficient Hessian obeys
\begin{equation}
\begin{aligned}
 \mathbf H_A&=(\mathbf B^{\mathsf T}\mathbf B)\otimes(\boldsymbol\Phi^{\mathsf T}\boldsymbol\Phi),\\
 \kappa_2(\mathbf H_A)&=\kappa_2(\mathbf B)^2\kappa_2(\boldsymbol\Phi)^2
\end{aligned}
\label{eq:gmf_conditioning}
\end{equation}
under those full-column-rank assumptions.
\end{proposition}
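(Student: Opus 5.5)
The plan rests on the standard identity $\operatorname{vec}(\mathbf X\mathbf Y\mathbf Z)=(\mathbf Z^{\mathsf T}\otimes\mathbf X)\operatorname{vec}(\mathbf Y)$ for column-major vectorization. Applied to $\widehat{\mathbf P}=\boldsymbol\Phi\mathbf A\mathbf B^{\mathsf T}$ from Eq.~\eqref{eq:gmf_master_forward} with geometry $\Theta$ held fixed, it gives $\operatorname{vec}(\widehat{\mathbf P})=(\mathbf B\otimes\boldsymbol\Phi)\operatorname{vec}(\mathbf A)$ directly. The map is linear in $\mathbf A$ because $\boldsymbol\Phi(\Theta)$ does not depend on $\mathbf A$.

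For the rank formula I will take singular value decompositions $\mathbf B=\mathbf U_B\boldsymbol\Sigma_B\mathbf V_B^{\mathsf T}$ and $\boldsymbol\Phi=\mathbf U_\Phi\boldsymbol\Sigma_\Phi\mathbf V_\Phi^{\mathsf T}$. The mixed-product property then gives
\begin{equation*}
\mathbf B\otimes\boldsymbol\Phi=(\mathbf U_B\otimes\mathbf U_\Phi)(\boldsymbol\Sigma_B\otimes\boldsymbol\Sigma_\Phi)(\mathbf V_B\otimes\mathbf V_\Phi)^{\mathsf T}.
\end{equation*}
Kronecker products of orthogonal matrices are orthogonal, so this is an SVD of $\mathbf B\otimes\boldsymbol\Phi$. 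Its singular values are all products $\sigma_j(\mathbf B)\sigma_k(\boldsymbol\Phi)$, and such a product is nonzero exactly when both factors are nonzero. Counting nonzero products yields $\operatorname{rank}(\mathbf B)\operatorname{rank}(\boldsymbol\Phi)$.

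Injectivity on $\mathbb R^{NM}$ means the rank equals $NM$. Since $\operatorname{rank}(\mathbf B)\le M$ and $\operatorname{rank}(\boldsymbol\Phi)\le N$, the product $\operatorname{rank}(\mathbf B)\operatorname{rank}(\boldsymbol\Phi)$ equals $NM$ iff both inequalities are equalities. (If $N$ or $M$ were $0$ the statement is vacuous; both are positive here.) Full column rank of $\mathbf B\in\mathbb R^{E\times M}$ forces $E\ge M$, and full column rank of $\boldsymbol\Phi\in\mathbb R^{R\times N}$ forces $R\ge N$. This is the only place where care with the "precisely when" logic is needed.

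For the Hessian, I will vectorize $\mathcal Q$ as $\tfrac12\|(\mathbf B\otimes\boldsymbol\Phi)\operatorname{vec}(\mathbf A)-\operatorname{vec}(\mathbf P)\|_2^2$. This is a quadratic, so its Hessian is $(\mathbf B\otimes\boldsymbol\Phi)^{\mathsf T}(\mathbf B\otimes\boldsymbol\Phi)$, and the mixed-product property turns it into $(\mathbf B^{\mathsf T}\mathbf B)\otimes(\boldsymbol\Phi^{\mathsf T}\boldsymbol\Phi)$. Under full column rank both factors are symmetric positive definite. The eigenvalues of $\mathbf H_A$ are then the products $\sigma_j(\mathbf B)^2\sigma_k(\boldsymbol\Phi)^2$. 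The largest is $\sigma_{\max}(\mathbf B)^2\sigma_{\max}(\boldsymbol\Phi)^2$ and the smallest, which is positive, is $\sigma_{\min}(\mathbf B)^2\sigma_{\min}(\boldsymbol\Phi)^2$. Their ratio is $\kappa_2(\mathbf B)^2\kappa_2(\boldsymbol\Phi)^2$, with $\kappa_2$ of a full-column-rank rectangular matrix defined as $\sigma_{\max}/\sigma_{\min}$.

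I will close with a remark that these statements concern the unrestricted linear map. The actual parameterization $\mathbf A=\operatorname{softplus}(\mathbf Z)$ and the $\ell_1$ loss are untouched, so no claim is made about them.
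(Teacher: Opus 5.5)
Your proposal is correct and follows the paper's proof essentially step for step. It uses the vec identity for the forward map, then SVDs of the two factors to obtain the singular values of $\mathbf B\otimes\boldsymbol\Phi$ as pairwise products (giving the rank and injectivity claims), and finally the Hessian $(\mathbf B\otimes\boldsymbol\Phi)^{\mathsf T}(\mathbf B\otimes\boldsymbol\Phi)=(\mathbf B^{\mathsf T}\mathbf B)\otimes(\boldsymbol\Phi^{\mathsf T}\boldsymbol\Phi)$ via the mixed-product rule. One small precision point: with full SVDs of rectangular factors, $\boldsymbol\Sigma_B\otimes\boldsymbol\Sigma_\Phi$ is in general only a row/column permutation of a rectangular-diagonal matrix, so your ``this is an SVD'' holds after absorbing those permutations, or, as the paper does, by using thin SVDs, where the middle factor is genuinely diagonal.
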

\begin{IEEEproof}
The vectorization identity gives the first line of Eq.~\eqref{eq:gmf_identifiability}. Applying thin SVDs to the two factors shows that nonzero singular values of the Kronecker product are their pairwise products. This proves the rank and injectivity claims. The quadratic Hessian is $(\mathbf B\otimes\Phi)^{\mathsf T}(\mathbf B\otimes\Phi)$, giving Eq.~\eqref{eq:gmf_conditioning}. The corresponding unrestricted least-squares perturbation satisfies
\begin{equation}
\begin{aligned}
 \delta\mathbf A_{\mathrm{LS}}
 &=\boldsymbol\Phi^\dagger\delta\mathbf P(\mathbf B^\dagger)^{\mathsf T},\\
 \|\delta\mathbf A_{\mathrm{LS}}\|_F
 &\leq\frac{\|\delta\mathbf P\|_F}
 {\sigma_{\min}(\boldsymbol\Phi)\sigma_{\min}(\mathbf B)}.
\end{aligned}
\label{eq:gmf_coefficient_perturbation}
\end{equation}
The norm bound follows from submultiplicativity.
\end{IEEEproof}

The quadratic model is used only to diagnose conditioning. It is not the Hessian of the actual $\ell_1$ loss or a replacement training objective. Nonnegativity does not generally remove rank-deficient ambiguity at a strictly positive coefficient matrix: a sufficiently small nullspace perturbation preserves positivity. Boundary-constrained cases may differ. These results do not establish global uniqueness with trainable geometry. In particular, identical cloned kernels give duplicate columns of $\Phi$ at insertion. The fixed spectral basis is never jointly learned.

\begin{proposition}[Geometric differentials]
\label{prop:gmf_differentials}
For fixed ray limits, exact Gaussian responses admit the derivatives below. For the rasterized model, Eq.~\eqref{eq:gmf_gradients} uses the full differential of Eq.~\eqref{eq:gmf_projected_gaussian}, including its rectification factor and position-dependent covariance.
\end{proposition}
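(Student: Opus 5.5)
The plan is to differentiate the two exact responses, the finite-segment formula in Eq.~\eqref{eq:gmf_segment_integral} and the infinite-line formula in Eq.~\eqref{eq:gmf_line_response}, with respect to the three geometric parameter blocks. I would route everything through the precision matrix $\mathbf Q_i=\boldsymbol\Sigma_i^{-1}$ and the scalars $\alpha,\beta,\gamma$ of Proposition~\ref{prop:gmf_ray}. These scalars depend on the center only through $\boldsymbol\delta=\mathbf o_r-\boldsymbol\mu_i$, and on rotation and scale only through $\mathbf Q_i$. So the first step is to record their partial derivatives:
\begin{equation}
\begin{aligned}
 \partial_{\boldsymbol\mu}\beta&=-\mathbf Q\mathbf d,\qquad
 \partial_{\boldsymbol\mu}\gamma=-2\mathbf Q\boldsymbol\delta,\\
 \mathrm d\alpha&=\mathbf d^{\mathsf T}(\mathrm d\mathbf Q)\mathbf d,\qquad
 \mathrm d\beta=\mathbf d^{\mathsf T}(\mathrm d\mathbf Q)\boldsymbol\delta,\qquad
 \mathrm d\gamma=\boldsymbol\delta^{\mathsf T}(\mathrm d\mathbf Q)\boldsymbol\delta,
\end{aligned}
\end{equation}
with $\mathrm d\mathbf Q=-\mathbf Q(\mathrm d\boldsymbol\Sigma)\mathbf Q$.

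Second, I would express $\mathrm d\boldsymbol\Sigma$ in terms of the parameters. From $\boldsymbol\Sigma=\mathbf R\operatorname{diag}(\boldsymbol\sigma^2)\mathbf R^{\mathsf T}$ we get
\begin{equation}
 \mathrm d\boldsymbol\Sigma=(\mathrm d\mathbf R)\mathbf D_\sigma\mathbf R^{\mathsf T}+\mathbf R\mathbf D_\sigma(\mathrm d\mathbf R)^{\mathsf T}+\mathbf R\operatorname{diag}(2\boldsymbol\sigma\odot\mathrm d\boldsymbol\sigma)\mathbf R^{\mathsf T},
\end{equation}
where $\mathbf D_\sigma=\operatorname{diag}(\boldsymbol\sigma^2)$. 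Here $\mathrm d\boldsymbol\sigma=\boldsymbol\sigma'(\boldsymbol\eta)\,\mathrm d\boldsymbol\eta$ through the Appendix scale map. The quaternion term is obtained by composing the normalization $\mathbf q/\|\mathbf q\|$, which has Jacobian $(\mathbf I-\hat{\mathbf q}\hat{\mathbf q}^{\mathsf T})/\|\mathbf q\|$, with the polynomial quaternion-to-rotation map. This map is smooth because $\mathbf q\neq0$.

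Third, I would apply the chain rule to the closed forms. For the infinite line, $\log\ell^\infty=\tfrac12\log(2\pi)-\tfrac12\log\alpha-\Delta/2$, with $\mathrm d\Delta=\mathrm d\gamma-2\beta\,\mathrm d\beta/\alpha+\beta^2\mathrm d\alpha/\alpha^2$. For the segment response I would add the derivative of the erf bracket. Using $\tfrac{\mathrm d}{\mathrm dz}\operatorname{erf}(z/\sqrt2)=\sqrt{2/\pi}\,e^{-z^2/2}$ together with
\begin{equation}
 \mathrm dz^\pm=\frac{t^\pm\mathrm d\alpha+\mathrm d\beta}{\sqrt\alpha}-\frac{z^\pm\,\mathrm d\alpha}{2\alpha},
\end{equation}
the segment derivative is this bracket term plus the product-rule term from the prefactor. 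As a sanity check, the center gradient of $\ell^\infty$ should reduce to $\ell^\infty(\mathbf Q-\mathbf Q\mathbf d\mathbf d^{\mathsf T}\mathbf Q/\alpha)\boldsymbol\delta$. This is the projected-precision form, and it also confirms translation invariance along $\mathbf d$.

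Finally, the rasterized part. Differentiating $\Phi_{r,i}$ in Eq.~\eqref{eq:gmf_projected_gaussian} requires $\mathrm d\widetilde{\boldsymbol\Sigma}=\mathrm d\mathbf J\,\mathbf W\boldsymbol\Sigma\mathbf W^{\mathsf T}\mathbf J^{\mathsf T}+\mathbf J\mathbf W\,\mathrm d\boldsymbol\Sigma\,\mathbf W^{\mathsf T}\mathbf J^{\mathsf T}+(\cdot)^{\mathsf T}$, where $\mathrm d\mathbf J$ comes from $\mathbf J$'s dependence on the camera-space position of $\boldsymbol\mu_i$. The other ingredients are $\mathrm d\widehat{\boldsymbol\Sigma}$, taken as the leading block of $\mathrm d\widetilde{\boldsymbol\Sigma}$, and $\mathrm d\widehat{\boldsymbol\mu}$ from the perspective map. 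For the log-determinant ratio, Proposition~\ref{prop:gmf_rectifier} gives $\log\zeta=\tfrac12\log(2\pi\xi)$ and $\mathrm d\log|\mathbf M|=\operatorname{tr}(\mathbf M^{-1}\mathrm d\mathbf M)$, and the exponent uses $\mathrm d(\boldsymbol\rho^{\mathsf T}\widehat{\boldsymbol\Sigma}^{-1}\boldsymbol\rho)$. The main obstacle I expect is bookkeeping the position dependence of $\mathbf J$. Unlike in standard splatting, this dependence enters both the covariance and the rectification factor. I would first write $\mathbf J$ explicitly for the pinhole ray-space map of R$^2$-Gaussian, then differentiate it entrywise with respect to the camera-space center $\mathbf W\boldsymbol\mu+\mathbf t$.
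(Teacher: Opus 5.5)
Your proposal is correct, but for the exact ray responses it takes a different route from the paper; the rasterized part follows the paper.

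\textbf{Rasterized part.} The log-determinant terms of $\mathrm d\log\zeta=\tfrac12[\operatorname{tr}(\widetilde{\boldsymbol\Sigma}^{-1}\mathrm d\widetilde{\boldsymbol\Sigma})-\operatorname{tr}(\widehat{\boldsymbol\Sigma}^{-1}\mathrm d\widehat{\boldsymbol\Sigma})]$, together with the covariance part of the exponent's differential, are exactly what the paper collects in $\mathbf H$ in Eq.~\eqref{eq:gmf_raster_differential}. The chain rule then runs through $\mathbf T=\mathbf J\mathbf W_v$, $\mathrm d\mathbf J$, $\mathrm d\boldsymbol\Sigma$ and the quaternion normalization, as in the paper. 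The paper leaves $\mathrm d\mathbf J$ abstract, so your explicit pinhole Jacobian is extra detail. Read your $+(\cdot)^{\mathsf T}$ as transposing only the $\mathrm d\mathbf J$ term. The $\mathrm d\boldsymbol\Sigma$ term is already symmetric and must not be doubled.

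\textbf{Exact responses: the two routes.} The paper differentiates under the integral using $\nabla_{\boldsymbol\mu}K=K\mathbf Q\mathbf y$ and $\nabla_{\boldsymbol\Sigma}K=\tfrac12K\mathbf Q\mathbf y\mathbf y^{\mathsf T}\mathbf Q$. It then uses $\partial_tK=-\alpha(t-t_0)K$ and integration by parts to reduce everything to the moments $M_0,M_1,M_2$, written via $\ell^{\mathrm{seg}}$ and the endpoint values $k^\pm$. You instead push the chain rule through the closed form via $\alpha,\beta,\gamma,z^\pm$.

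\textbf{Why they agree.} The bridge is $\Delta+(z^\pm)^2=\alpha(t^\pm)^2+2\beta t^\pm+\gamma$. With it, the erf-bracket contribution collapses to $(k^+\mathrm dz^+-k^-\mathrm dz^-)/\sqrt\alpha$. For the center this gives exactly $\mathbf Q(M_0\bar{\mathbf y}+M_1\mathbf d)$ with $\bar{\mathbf y}=\boldsymbol\delta-(\beta/\alpha)\mathbf d$.

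\textbf{Trade-off.} Your route is purely mechanical. However, it leaves scalar differentials in $\mathrm d\alpha,\mathrm d\beta,\mathrm d\gamma$ that still have to be assembled into matrix gradients. The moment route produces the compact matrix forms of Eq.~\eqref{eq:gmf_exact_ray_derivatives} directly. It also makes the infinite-line case ($M_1=0$, $M_2=\ell^\infty/\alpha$) immediate.

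\textbf{Closing step.} To reach the statement's second sentence, you should finish by substituting the full $\mathrm d\Phi_{r,i}$ into Eq.~\eqref{eq:gmf_first_variation}, with culling decisions held fixed, as the paper does.
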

\begin{IEEEproof}
For $\mathbf y(t)=\mathbf o+t\mathbf d-\boldsymbol\mu$, differentiation of the unit-peak kernel gives
\begin{equation}
 \nabla_{\boldsymbol\mu}K=K\mathbf Q\mathbf y,\qquad
 \nabla_{\boldsymbol\Sigma}K=\tfrac12K\mathbf Q\mathbf y\mathbf y^{\mathsf T}\mathbf Q.
\label{eq:gmf_kernel_derivatives}
\end{equation}
There is no probability-density normalization term. Set $t_0=-\beta/\alpha$, $\bar{\mathbf y}=\boldsymbol\delta+t_0\mathbf d$, $k^{\pm}=K(\mathbf x_r(t^{\pm}))$, and
\begin{equation}
\begin{aligned}
 M_0&=\ell^{\mathrm{seg}},\qquad M_1=(k^--k^+)/\alpha,\\
 M_2&=\{M_0+(t^--t_0)k^--(t^+-t_0)k^+\}/\alpha.
\end{aligned}
\label{eq:gmf_ray_moments}
\end{equation}
Integration by parts identifies $M_j=\int_{t^-}^{t^+}(t-t_0)^jK\,\mathrm dt$ for $j=0,1,2$. Differentiating under the integral in Eq.~\eqref{eq:gmf_kernel_derivatives} therefore yields
\begin{equation}
\begin{aligned}
 \nabla_{\boldsymbol\mu}\ell^{\mathrm{seg}}
 &=\mathbf Q(M_0\bar{\mathbf y}+M_1\mathbf d),\\
 \nabla_{\boldsymbol\Sigma}\ell^{\mathrm{seg}}
 &=\tfrac12\mathbf Q\big[M_0\bar{\mathbf y}\bar{\mathbf y}^{\mathsf T}
   +M_2\mathbf d\mathbf d^{\mathsf T}\\
 &\hspace{3.6em}+M_1(\bar{\mathbf y}\mathbf d^{\mathsf T}
   +\mathbf d\bar{\mathbf y}^{\mathsf T})\big]\mathbf Q.
\end{aligned}
\label{eq:gmf_exact_ray_derivatives}
\end{equation}
The infinite-line case has $M_1=0$ and $M_2=\ell^\infty/\alpha$.

For the rectified response, write $\mathbf S=\widehat{\boldsymbol\Sigma}$, $\mathbf r=\mathbf S^{-1}(\mathbf u-\widehat{\boldsymbol\mu})$, and let $\mathbf E_2=[\mathbf I_2\ \mathbf0]$ select the first two coordinates. Logarithmic differentiation gives
\begin{equation}
\begin{aligned}
 \mathrm d\phi
 &=\phi\big[\mathbf r^{\mathsf T}\mathrm d\widehat{\boldsymbol\mu}
   +\langle\mathbf H,\mathrm d\widetilde{\boldsymbol\Sigma}\rangle_F\big],\\
 \mathbf H
 &=\tfrac12\big[\widetilde{\boldsymbol\Sigma}^{-1}
   +\mathbf E_2^{\mathsf T}(\mathbf r\mathbf r^{\mathsf T}-\mathbf S^{-1})\mathbf E_2\big].
\end{aligned}
\label{eq:gmf_raster_differential}
\end{equation}
The first inverse-covariance term comes from $\log|\widetilde{\boldsymbol\Sigma}|$, and the negative $\mathbf S^{-1}$ term comes from $-\log|\mathbf S|$. Both would be lost by freezing the amplitude correction. With $\mathbf T=\mathbf J\mathbf W_v$ and $\boldsymbol\Lambda=\operatorname{diag}(\boldsymbol\sigma^2)$, the chain rule is
\begin{equation}
\begin{aligned}
 \mathrm d\widetilde{\boldsymbol\Sigma}
 &=\mathbf T\,\mathrm d\boldsymbol\Sigma\,\mathbf T^{\mathsf T}
 +(\mathrm d\mathbf T)\boldsymbol\Sigma\mathbf T^{\mathsf T}
 +\mathbf T\boldsymbol\Sigma(\mathrm d\mathbf T)^{\mathsf T},\\
 \mathrm d\boldsymbol\Sigma
 &=(\mathrm d\mathbf R)\boldsymbol\Lambda\mathbf R^{\mathsf T}
 +\mathbf R\boldsymbol\Lambda(\mathrm d\mathbf R)^{\mathsf T}\\
 &\quad+\mathbf R\operatorname{diag}(2\boldsymbol\sigma\odot\mathrm d\boldsymbol\sigma)\mathbf R^{\mathsf T},\\
 \mathrm d\bar{\mathbf q}
 &=\frac{\mathbf I_4-\bar{\mathbf q}\bar{\mathbf q}^{\mathsf T}}{\|\mathbf q\|_2}\mathrm d\mathbf q,
 \qquad \mathrm d\mathbf T=(\mathrm d\mathbf J)\mathbf W_v.
\end{aligned}
\label{eq:gmf_geometry_differential}
\end{equation}
Here $\mathrm d\mathbf R$ follows the normalized quaternion convention. In perspective geometry $\mathbf J$ depends on the center, so its differential cannot generally be discarded. Substituting these differentials into Eq.~\eqref{eq:gmf_first_variation} proves the parameter gradients in Eq.~\eqref{eq:gmf_gradients}. They hold locally with fixed culling decisions. Hard active-set changes are not differentiated.
\end{IEEEproof}

\begin{proposition}[Material-specific refinement evidence]
\label{prop:gmf_evidence}
For the separate projected-center branches in Eq.~\eqref{eq:gmf_material_evidence}, their sum is the shared projected-center partial derivative. Their maximum norm satisfies
\begin{equation}
 \left\|\frac1M\sum_m\mathbf h_{i,m}\right\|_2
 \leq\frac1M\sum_m\|\mathbf h_{i,m}\|_2
 \leq\max_m\|\mathbf h_{i,m}\|_2.
\label{eq:gmf_max_evidence_bound}
\end{equation}
\end{proposition}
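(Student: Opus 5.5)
The proof has two parts: an additive decomposition, which comes from the linearity of the forward model in the material branches, and a chain of elementary norm inequalities.

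\emph{Sum identity.} I would go back to Eq.~\eqref{eq:gmf_master_forward} and note that the predicted material path integral for material $m$ on ray $r$ is $\sum_i \Phi_{r,i}a_{i,m}$. Hence primitive $i$ enters the loss only through the $M$ products $\Phi_{r,i}a_{i,m}$, and each product can be given its own projected-center variable $\mathbf u_{i,m}^{(v)}$. Formally, define the lifted loss $\widetilde{\mathcal L}(\mathbf u_{i,1},\ldots,\mathbf u_{i,M})$ by evaluating the rectified footprint in branch $m$ at center $\mathbf u_{i,m}$. On the diagonal $\mathbf u_{i,m}=\widehat{\boldsymbol\mu}_{i,v}$ it coincides with $\mathcal L_{\mathrm{proj}}$. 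The multivariable chain rule along the diagonal map $\mathbf w\mapsto(\mathbf w,\ldots,\mathbf w)$ then gives $\nabla_{\widehat{\boldsymbol\mu}}\mathcal L_{\mathrm{proj}}=\sum_m \nabla_{\mathbf u_{i,m}}\widetilde{\mathcal L}=\sum_m\mathbf h_{i,m}$.

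To see this concretely, I would differentiate $\Phi_{r,i}$ in Eq.~\eqref{eq:gmf_projected_gaussian} with respect to its projected center, holding $\widehat{\boldsymbol\Sigma}$ and $\zeta$ fixed as the statement specifies. This gives $\Phi_{r,i}\widehat{\boldsymbol\Sigma}^{-1}\boldsymbol\rho_{i,r}$, the same quantity as the $\mathbf r^{\mathsf T}\mathrm d\widehat{\boldsymbol\mu}$ term in Eq.~\eqref{eq:gmf_raster_differential}. I would then contract it with $(\mathbf D\mathbf A^{\mathsf T})_{r,i}=\sum_m D_{r,m}a_{i,m}$ from Eq.~\eqref{eq:gmf_first_variation}. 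Swapping the finite sums over $r$ and $m$ reproduces exactly $\sum_m\mathbf h_{i,m}$ with $\mathbf h_{i,m}$ as in Eq.~\eqref{eq:gmf_material_evidence}. The caveats are the ones already stated in the paper: fixed active set, residuals away from zero or a fixed subgradient choice, and the result is the projected-center partial only, not the full 3D gradient.

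\emph{Inequalities.} The first inequality in Eq.~\eqref{eq:gmf_max_evidence_bound} is the triangle inequality applied to $\sum_m\mathbf h_{i,m}$, followed by division by $M$. The second holds because an arithmetic mean of nonnegative numbers is at most their maximum. Since the outer terms are just $\tfrac1M$ times the norm of the shared derivative, a corollary follows: the max-aggregated score is at least the norm of the averaged shared gradient. So whenever the mean-vector score would trigger refinement at threshold $\tau_g$, the max score also does. I would also note that the chain extends through the temporal weighting in Eq.~\eqref{eq:gmf_refinement_score}, because that weighting is a nonnegative convex-type average applied to each side, so the ordering is preserved.

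The only step needing care is stating the branch lifting precisely, so that the sum identity is a genuine chain-rule statement rather than a notational one. The inequalities themselves are routine.
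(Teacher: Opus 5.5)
Your proposal is correct and follows the paper's proof. Both arguments compute $\nabla_{\widehat{\boldsymbol\mu}}\Phi_{r,i}=\Phi_{r,i}\widehat{\boldsymbol\Sigma}_{i,v}^{-1}\boldsymbol\rho_{i,r}$ with the covariance and rectification factor held fixed, weight it by $a_{i,m}D_{r,m}$ to obtain each branch, sum the branches to recover the shared projected-center derivative, and then finish with the triangle inequality and the fact that a mean is at most the maximum; your diagonal-lifting chain-rule formalization and your observation that the ordering survives the common temporal weighting in Eq.~\eqref{eq:gmf_refinement_score} are sound extras, not a different route.
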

\begin{IEEEproof}
The projected-center derivative of a footprint with fixed covariance is $\nabla_{\widehat{\boldsymbol\mu}}\Phi_{r,i}=\Phi_{r,i}\widehat{\boldsymbol\Sigma}^{-1}\boldsymbol\rho_{i,r}$. Its coefficient in material branch $m$ is $a_{i,m}D_{r,m}$, giving Eq.~\eqref{eq:gmf_material_evidence}. Summing branches gives the shared-center contribution. The triangle inequality and the bound of an arithmetic mean by the largest entry prove Eq.~\eqref{eq:gmf_max_evidence_bound}.
\end{IEEEproof}

The result prevents averaging-induced dilution, but it does not guarantee accurate recovery or optimal allocation. Temporal averaging and material maximization do not commute: responses $(2,0)$ and $(0,2)$ over two iterations have mean-of-maxima $2$ but max-of-means $1$ before denominator damping. Equation~\eqref{eq:gmf_refinement_score} uses the former.

\begin{proposition}[Effect of primitive edits]
\label{prop:gmf_edits}
Under additive rendering with unchanged other primitives, a clone with identical kernels and halved activated coefficients preserves the field and prediction at insertion. The splitting rule in Sec.~\ref{sec:method_densification} conserves coefficient sums but not spatial integrals in general. For any resulting prediction perturbation,
\begin{equation}
 \big|\mathcal L(\widehat{\mathbf P}+\delta\widehat{\mathbf P})
       -\mathcal L(\widehat{\mathbf P})\big|
 \leq\frac1R\|\delta\widehat{\mathbf P}\mathbf W_E\|_{1,1}.
\label{eq:gmf_edit_loss_bound}
\end{equation}
Here $\mathcal L(\widehat{\mathbf P})=\| (\widehat{\mathbf P}-\mathbf P)\mathbf W_E\|_{1,1}/R$.
\end{proposition}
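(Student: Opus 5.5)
The statement has three parts, and I would prove them separately, using only the linearity of Eq.~\eqref{eq:gmf_vector_field} and Eq.~\eqref{eq:gmf_master_forward} together with the triangle inequality.

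\emph{Cloning.} Take primitive $i$ with kernel $K_i$, footprint column $\boldsymbol\phi_i$, and coefficient $\mathbf a_i$. The two children have identical geometry, so each has kernel $K_i$ and, under unchanged linear rendering, footprint $\boldsymbol\phi_i$. Their activated coefficients are $\mathbf a_i/2$. The field contribution is therefore $2\cdot K_i(\mathbf x)\,\mathbf a_i/2=K_i(\mathbf x)\mathbf a_i$, so $\mathbf c_{\Theta,\mathbf A}$ is unchanged. In Eq.~\eqref{eq:gmf_edit_response} the same computation gives $2\boldsymbol\phi_i(\mathbf B\mathbf a_i/2)^{\mathsf T}-\boldsymbol\phi_i(\mathbf B\mathbf a_i)^{\mathsf T}=\mathbf 0$. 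One detail needs care: the halving must act on the activated amplitudes, so the logits are reset to $\operatorname{softplus}^{-1}(a_{i,m}/2)$. This is well defined because $a_{i,m}>0$.

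\emph{Splitting.} Coefficient-sum conservation follows directly from the rule $\sum_{j\in\mathcal C_i}\mathbf a_{i,j}=\mathbf a_i$. For spatial integrals, I would use the fact that an unnormalized kernel integrates to $\int K_j=(2\pi)^{3/2}|\boldsymbol\Sigma_j|^{1/2}$, which depends on the scales. Each child has reduced scales $\boldsymbol\sigma_j=\boldsymbol\sigma_i/\lambda$ with $\lambda>1$. The children's total integral is then $(2\pi)^{3/2}|\boldsymbol\Sigma_i|^{1/2}\lambda^{-3}\sum_j\mathbf a_{i,j}$. This is not equal to the parent's $(2\pi)^{3/2}|\boldsymbol\Sigma_i|^{1/2}\mathbf a_i$, and the two differ whenever $\lambda\neq1$. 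A single explicit counterexample suffices, for instance two children with $\lambda=1.6$, since the claim is only that integrals are not conserved in general. The same argument applied to the projected integrals $\int\Phi$ shows that projections change as well.

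\emph{Loss bound.} Write $\mathbf X=(\widehat{\mathbf P}-\mathbf P)\mathbf W_E$ and $\mathbf Y=\delta\widehat{\mathbf P}\mathbf W_E$. Then $R\mathcal L(\widehat{\mathbf P}+\delta\widehat{\mathbf P})=\|\mathbf X+\mathbf Y\|_{1,1}$. Apply the reverse triangle inequality for the entrywise norm: $\bigl|\|\mathbf X+\mathbf Y\|_{1,1}-\|\mathbf X\|_{1,1}\bigr|\le\|\mathbf Y\|_{1,1}$. Dividing by $R$ gives Eq.~\eqref{eq:gmf_edit_loss_bound}.

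None of these steps is difficult. The only subtle points are two caveats. First, identical footprints for clones require the rasterized footprint, including culling and truncation, to be a deterministic function of geometry, hence the qualifier ``unchanged linear rendering.'' Second, the split statement is a \emph{non}-conservation claim, so the counterexample must be stated carefully.
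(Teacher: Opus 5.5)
Your proof is correct and matches the paper's argument. It shows that the clone perturbation vanishes for identical kernels with halved activated coefficients, that the whole-space integral of the unnormalized kernel scales as $k_c^{-3}$ (with $k_c=1.6$ for two children, so the coefficient sum is conserved but the integral is not), and that the loss bound follows from the entrywise reverse triangle inequality. One caveat the paper adds, which you omit, is that the $\lambda^{-3}$ integral identity holds only if the bounded scale map does not clip the children's reduced scales.
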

\begin{IEEEproof}
The field perturbation corresponding to Eq.~\eqref{eq:gmf_edit_response} is
\begin{equation}
 \delta\mathbf c_i(\mathbf x)
 =\sum_{j\in\mathcal C_i}K_{i,j}(\mathbf x)\mathbf a_{i,j}-K_i(\mathbf x)\mathbf a_i.
\label{eq:gmf_edit_field}
\end{equation}
For a clone, $K_{i,1}=K_{i,2}=K_i$ and $\mathbf a_{i,1}=\mathbf a_{i,2}=\mathbf a_i/2$, making Eqs.~\eqref{eq:gmf_edit_response} and \eqref{eq:gmf_edit_field} zero. This identity requires identical masks and no amplitude-dependent cutoff.

For a split, the parent is replaced by $N_c=2$ children according to
\begin{equation}
\begin{aligned}
 \boldsymbol\mu_{i,j}
 &=\boldsymbol\mu_i+\mathbf R_i\operatorname{diag}(\boldsymbol\sigma_i)\boldsymbol\epsilon_j,
 \quad \boldsymbol\epsilon_j\sim\mathcal N(\mathbf0,\mathbf I_3),\\
 \boldsymbol\sigma_{i,j}&=\boldsymbol\sigma_i/k_c,\qquad k_c=0.8N_c,\\
 \mathbf R_{i,j}&=\mathbf R_i,\qquad \mathbf a_{i,j}=\mathbf a_i/N_c.
\end{aligned}
\label{eq:gmf_split}
\end{equation}
For the unnormalized kernels in Eq.~\eqref{eq:gmf_parameterization},
\begin{equation}
\begin{aligned}
 \mathbf m_i&:=\int_{\mathbb R^3}K_i(\mathbf x)\mathbf a_i\,\mathrm d\mathbf x
 =(2\pi)^{3/2}|\boldsymbol\Sigma_i|^{1/2}\mathbf a_i,\\
 \sum_j\mathbf a_{i,j}&=\mathbf a_i,\qquad
 \sum_j\mathbf m_{i,j}=k_c^{-3}\mathbf m_i.
\end{aligned}
\label{eq:gmf_split_integral}
\end{equation}
The second identity follows from $\boldsymbol\Sigma_{i,j}=\boldsymbol\Sigma_i/k_c^2$, provided no additional scale clipping changes Eq.~\eqref{eq:gmf_split}. For $N_c=2$, the whole-space integral ratio is $1/1.6^3\approx0.24414$, not one. Finally, applying $||x+y|-|x||\leq|y|$ entrywise proves Eq.~\eqref{eq:gmf_edit_loss_bound}.
\end{IEEEproof}

A clone does not increase the column span at insertion, and identical copies may remain identical if all updates and optimizer states are symmetric. A split generally changes the loss, with no sign guaranteed for that change. These properties are not convergence claims.

For pruning a primitive, $\delta\widehat{\mathbf P}_i=-\boldsymbol\phi_i(\mathbf B\mathbf a_i)^{\mathsf T}$. Nonnegativity implies the exact perturbation size
\begin{equation}
 \frac1R\|\delta\widehat{\mathbf P}_i\mathbf W_E\|_{1,1}
 =\frac{\|\boldsymbol\phi_i\|_1}{R}\mathbf w^{\mathsf T}\mathbf B\mathbf a_i,
\label{eq:gmf_prune_perturbation}
\end{equation}
where $\mathbf w=[w_1,\ldots,w_E]^{\mathsf T}$. If all amplitudes are below $\tau_a$, replace $\mathbf a_i$ by $\tau_a\mathbf1_M$ to obtain an upper bound. Thus small peak amplitude alone is not a scale-independent error guarantee. Similarly, a center outside $\Omega$ need not have zero Gaussian contribution inside it.

\textit{Parameter maps and projection-only initialization.}
With finite logits, the scale parameterizations are
\begin{equation}
 \boldsymbol\sigma_i=
 \begin{cases}
 \sigma_{\min}\mathbf1_3+(\sigma_{\max}-\sigma_{\min})\operatorname{sigmoid}(\boldsymbol\eta_i),
 &\text{bounded},\\
 \exp(\boldsymbol\eta_i),&\text{otherwise},
 \end{cases}
\label{eq:gmf_scale_maps}
\end{equation}
where $0<\sigma_{\min}<\sigma_{\max}$. Their differentials are obtained elementwise: $\mathrm d\sigma=\sigma\,\mathrm d\eta$ in the exponential branch, or $\mathrm d\sigma=(\sigma_{\max}-\sigma_{\min})s(1-s)\mathrm d\eta$ with $s=\operatorname{sigmoid}(\eta)$ in the bounded branch.

Initialization uses only measured projections:
\begin{equation}
\begin{aligned}
 V_e^{(0)}&=\operatorname{FDK}(\mathbf p_e),\\
 \mathbf c^{(0)}(\mathbf x)
 &\in\underset{\mathbf z\geq0}{\operatorname{argmin}}\,
 \|\mathbf B\mathbf z-\mathbf v^{(0)}(\mathbf x)\|_2^2.
\end{aligned}
\label{eq:gmf_initialization_nnls}
\end{equation}
Set $s^{(0)}=\mathbf1_M^{\mathsf T}\mathbf c^{(0)}$ and $\Omega_\tau=\{\mathbf x:s^{(0)}(\mathbf x)>\tau_{\mathrm{init}}\}$. Initial centers are sampled from $\Omega_\tau$. When candidates are insufficient, highest-score locations and sampling with replacement are permitted. With nearest-neighbor distance $d_i=\min_{j\ne i}\|\boldsymbol\mu_i^{(0)}-\boldsymbol\mu_j^{(0)}\|_2$, the initialization targets are
\begin{equation}
\begin{aligned}
 \widetilde{\mathbf a}_i^{(0)}&=\eta_{\mathrm{init}}\mathbf c^{(0)}(\boldsymbol\mu_i^{(0)}),\\
 \widetilde{\boldsymbol\sigma}_i^{(0)}
 &=\operatorname{clip}(d_i,\sigma_{\min},\sigma_{\max})\mathbf1_3,
 \qquad \mathbf R_i^{(0)}=\mathbf I_3.
\end{aligned}
\label{eq:gmf_initialization_targets}
\end{equation}
For positive amplitudes the inverse-Softplus map is
\begin{equation}
 \beta=\log(e^a-1)=a+\log(1-e^{-a}),\qquad a>0.
\label{eq:gmf_inverse_softplus}
\end{equation}
A zero NNLS coefficient is not representable by finite Softplus logits, and a clipped scale at a sigmoid endpoint requires an interior numerical convention. Finite initialization therefore requires a positive numerical floor and an interior convention for bounded scales. The tildes in Eq.~\eqref{eq:gmf_initialization_targets} distinguish targets from finite activated parameters. Cloning and splitting divide activated coefficients, not their logits. Bound-violating split scales likewise require explicit handling. The integral identity in Eq.~\eqref{eq:gmf_split_integral} applies to the stated, unclipped split rule.

\textit{Continuous queries and computational accounting.}
For arbitrary query locations, set $[\mathbf K_Q]_{q,i}=K_i(\mathbf x_q;\Theta_T)$. The final material and energy samples are
\begin{equation}
 \mathbf C_Q=\mathbf K_Q\mathbf A_T,\qquad
 \mathbf V_Q=\mathbf C_Q\mathbf B^{\mathsf T}.
\label{eq:gmf_continuous_query}
\end{equation}
These are point samples. Denser queries do not establish additional physical resolution. The stored trainable scalar counts are $N(10+M)$ for the shared representation and $11MN$ for $M$ independent single-material sets of $N$ primitives each. They are not intrinsic degrees of freedom because quaternion normalization is redundant. Separate material rasterization with $K_v$ effective Gaussian--pixel interactions per view costs
\begin{equation}
 \begin{aligned}
 \text{view evaluation: }&\mathcal O(MK_v)+\mathcal O(EMHW),\\
 \text{trainable storage: }&\mathcal O(N(10+M)).
 \end{aligned}
\label{eq:gmf_computational_cost}
\end{equation}
The second work term is mixing on an $H\times W$ detector. Sorting/culling, optimizer states, initialization, and output arrays are additional costs. Explicitly storing material samples requires $\mathcal O(QM)$ space. No fused multi-channel acceleration or material-count-independent runtime is assumed.

\bibliographystyle{IEEEtran}
\bibliography{reference}

\end{document}